
\documentclass{article}

\usepackage{microtype}
\usepackage{graphicx}
\usepackage{subfigure}
\usepackage{booktabs} 

\usepackage{hyperref}



\usepackage[accepted]{icml2025}

\usepackage{amsmath}
\usepackage{amssymb}
\usepackage{mathtools}
\usepackage{amsthm}
\usepackage{multirow}
\usepackage{makecell}

\usepackage[capitalize,noabbrev]{cleveref}

\theoremstyle{plain}

\theoremstyle{definition}

\theoremstyle{remark}

\usepackage{thmtools,thm-restate}
\usepackage{cleveref}
\newcommand{\cut}[1]{}
%

\DeclareMathOperator*{\argmin}{arg\,min}
\DeclareMathOperator*{\argmax}{arg\,max}

\newcommand{\etal}{\textit{et al}.}

\newcommand{\ourModel}{Sharpness-Aware Trajectory Matching}
\newcommand{\ouracronym}{SATM}

\newcommand{\norm}[1]{ \| #1  \|  }

\usepackage[textsize=tiny]{todonotes}

\icmltitlerunning{}

\begin{document}

\twocolumn[
\icmltitle{Enhancing Generalization via Sharpness-Aware Trajectory Matching for \\ Dataset Condensation}



\icmlsetsymbol{equal}{*}

\begin{icmlauthorlist}
\icmlauthor{Boyan Gao}{ox}
\icmlauthor{Bo Zhao}{jiaotong}
\icmlauthor{Shreyank N Gowda}{nottingham}
\icmlauthor{Xingrun Xing}{cas}
\icmlauthor{Yibo Yang}{ox}
\icmlauthor{Timothy Hospedales}{edinburgh}
\icmlauthor{David A. Clifton}{ox}
\end{icmlauthorlist}

\icmlaffiliation{ox}{Department of Engineering Science, University of Oxford}
\icmlaffiliation{jiaotong}{School of Artificial Intelligence, Shanghai Jiao Tong University}
\icmlaffiliation{nottingham}{School of Computer Science, University of Nottingham}
\icmlaffiliation{cas}{Institute of Automation, Chinese Academy of Sciences}
\icmlaffiliation{edinburgh}{School of Informatics, University of Edinburgh}

\icmlcorrespondingauthor{Boyan Gao}{boyan.gao@eng.ox.ac.uk}

\icmlkeywords{Machine Learning, ICML}
\vskip 0.3in
]



\printAffiliationsAndNotice{}  


\begin{abstract}
Dataset condensation aims to synthesize datasets with a few representative samples that can effectively represent the original datasets. This enables efficient training and produces models with performance close to those trained on the original sets. Most existing dataset condensation methods conduct dataset learning under the bilevel (inner- and outer-loop) based optimization. However, the preceding methods perform with limited dataset generalization due to the notoriously complicated loss landscape and expensive time-space complexity of the inner-loop unrolling of bilevel optimization. These issues deteriorate when the datasets are learned via matching the trajectories of networks trained on the real and synthetic datasets with a long horizon inner-loop. To address these issues, we introduce Sharpness-Aware Trajectory Matching (SATM), which enhances the generalization capability of learned synthetic datasets by optimising the sharpness of the loss landscape and objective simultaneously. Moreover, our approach is coupled with an efficient hypergradient approximation that is mathematically well-supported and straightforward to implement along with controllable computational overhead. Empirical evaluations of SATM demonstrate its effectiveness across various applications, including in-domain benchmarks and out-of-domain settings. Moreover, its easy-to-implement properties afford flexibility, allowing it to integrate with other advanced sharpness-aware minimizers. Our code will be released. 
\end{abstract}

\section{Introduction}
The success of modern deep learning in various fields, exemplified by Segment Anything~\citep{samsegment} in computer vision and GPT~\citep{gtp_2022} in natural language processing, comes at a significant cost in terms of the enormous computational expenses associated with large-scale neural network training on massive amounts of real-world data~\cite{radford2021learning,pmlr-v202-li23q,schuhmann2022laion,li2022blip,gowda2023watt}. To reduce training and dataset storage costs, selecting the representative subset based on the specific importance criteria forms a direct solution~\citep{har2004coresets, yang2205dataset, paul2021deep, wang2022improving}. However, these methods fail to handle the cases when the samples are distinct and the information is uniformly distributed in the dataset. In contrast, Dataset Condensation (DC)~\citep{dc2021, dm, wang2018dataset, mtt, ftd} focuses on creating a small, compact version of the original dataset that retains its representative qualities. As a result, models trained on the condensed dataset perform comparably to those trained on the full dataset, significantly reducing training costs and storage requirements, and meanwhile expediting other machine learning tasks such as hyperparameter tuning, continual learning~\citep{rosasco2021distilled}, architecture search~\citep{sangermano2022sample,yu2020semantic, masarczyk2020reducing}, and privacy-preserving~\citep{shokri2015privacy, dong2022privacy}.

Given the significant practical value of condensed datasets, considerable effort has been directed toward designing innovative surrogate methods to ensure that synthetic datasets capture representative characteristics, thereby enhancing future deployments' performance~\citep{dm, dc2021, zhou2022dataset, kim2022dataset}. Bilevel optimization (BO) provides a DC paradigm learning synthetic dataset through its main optimization objective in the outer-loop constrained by training neural networks in its inner-loop. One line of the representative solutions condenses datasets by minimizing the disparity between training trajectories on synthetic and real sets, achieving notable performance~\citep{mtt}. The following studies either reduce the computational cost of inner-loop unrolling or steer the optimization process to enhance the generalization of the learned dataset to the unseen tasks. For instance, FTD~\citep{ftd} improves the performance of synthetic datasets by leveraging high-quality inner-loop expert trajectories and incorporating momentum into the outer-loop optimization via Exponential Moving Average (EMA) with extra memory overhead introduced, increasing along with the synthetic dataset budget. TESLA~\citep{TESLA} is proposed with a two-inner-loop-based algorithm to approximate the hypergradient for the dataset updates maintaining a constant memory usage. However, the outer-loop loss landscape, formed with numerous sharp regions shaped by the dynamics of the inner-loop~\citep{sharpmaml,franceschi2017forward}, is often overlooked in the dataset condensation field. The challenges inherent in such optimization settings result in the limited generalization performance of the learned datasets.

Inspired by the sharpness-aware optimizers~\citep{sam, asam, vasso}, which improves generalization by minimizing loss landscape sharpness to achieve flat convergence regions in uni-level optimization, we propose Sharpness-Aware Trajectory Matching (SATM) to DC to reduce the sharpness of the outer-loop landscape and enhance the generalization ability of the learned dataset. However, direct application is infeasible due to the tremendous computation overhead caused by the notorious two-stage gradient estimation, which typically doubles both the time and memory costs throughout the learning process. To address this issue, we propose a lightweight trajectory matching-based method composed of two computationally efficient strategies, namely truncated unrolling hypergradient and trajectory reusing, with controllable memory cost for gradient estimation. Our method improves the generalization ability of the trajectory-matching algorithm significantly on both in-domain and out-of-domain tasks with noticeable improvement margins across various applications whilst achieving efficient time and memory cost. More specifically, in terms of the computational overhead, our method surpasses one of the most efficient algorithms, TESLA~\citep{TESLA}, regarding runtime cost with comparable memory consumption. The main contributions of this work are summarised as: 
\begin{itemize}
    \item We primarily study and improve the generalization ability of dataset condensation and propose \ourModel{} by jointly minimizing the sharpness and the distance between training trajectories with a tailored loss landscape smoothing strategy.   
    \item A simple and easy-to-implement method, integrating two hypergradient approximation strategies, is proposed to handle the tremendous computational overhead introduced by sharpness minimization. We further reduce the computational redundancy by deriving a closed-form solution for the learning rate learning. For all the proposed approximation methods, we provide rigorous theoretical support by bounding the errors of the approximations and analysing the approximation error caused by hyperparameters, which shed light on meaningful hyperparameter tuning. 
    \item SATM outperforms the trajectory-matching-based competitors on various dataset condensation benchmarks under in- and out-of-domain settings. More importantly, our method demonstrates noticeable improvement margins on ImageNet-1K where most existing methods fail to condense.
\end{itemize}
\section{Related Work}
\subsection{Dataset Condensation} 
\cut{
Bilevel optimization~\citep{sinha2017review,zhang2024introduction}, nesting optimization problems as constraints for the main optimization objective, is formulated as follows: 
\begin{align}
    \min_{\phi} \, &\mathcal{L}^{outer}(\theta^*(\phi), \phi)  \\
    \textbf{s.t.}\,\, &\theta^*(\phi) = \argmin_{\theta} \, \mathcal{L}^{inner}(\theta, \phi )
\end{align}
where, $\argmin_{\theta}\mathcal{L}^{inner}(\theta, \phi)$ forms the constraint for the main optimization objective function, $\mathcal{L}^{outer}$. The learnable parameter $\phi$ in the outer-loop influences the performance of the inner-loop state, $\theta(\phi)$, while the inner-loop also depends on the current free parameter on the outer-loop. This optimization framework is widely used in various machine learning areas, including hyperparameter tuning \citep{lorraine2020optimizing, maclaurin2015gradient, mackay2019self} and meta-learning \citep{finn2017model, gao2022loss, rajeswaran2019meta, gao2021searching}.
}
Inspired by knowledge distillation~\citep{gou2021knowledge,yang2020distilling} and meta-learning driven by Bilevel optimization (BO)~\citep{lorraine2020optimizing, maclaurin2015gradient, mackay2019self, finn2017model, gao2022loss, rajeswaran2019meta, gao2021searching}, Wang~\etal~\citep{wang2018dataset} leverage BO to distill a small, compact synthetic dataset for efficient training on unseen downstream tasks. Several works expanding on this framework match gradients~\citep{dsa2021, dc2021,lee2022dataset}, features~\citep{wang2022cafe}, and distributions~\citep{dm} produced by the synthetic and real sets. RDED~\citep{rded} introduces new perspectives to the dataset distillation field by constructing synthetic images from original image crops and labelling them with a pre-trained model. Usually, the existing dataset condensation methods conduct a few iterations of inner-loop unrolling in BO to mitigate the computational cost of the nested optimization process. To avoid the same issue, Nguyen~\etal~\citep{nguyen2021dataset, nguyen2022dataset} directly estimate the convergence of the inner-loop using the Neural Tangent Kernel (NTK) to emulate the effects from the synthetic sets. However, due to the heavy computational demands of matrix inversion, the NTK-based method struggles to scale up for condensing large, complex datasets. MTT~\citep{mtt} emphasises the benefits of a long horizon inner-loop and minimizes the differences between synthetic and expert training trajectory segments with the following studies such as FTD~\citep{ftd}, TESLA~\citep{TESLA} and DATM~\citep{datm}. Nonetheless, the learned synthetic dataset often overfits the neural architecture used in the expert trajectories, resulting in limited generalization ability. In this work, we address this problem by exploring the flatness of the synthetic dataset's loss landscape.

\subsection{Flatness of the Loss Landscape and Generalization} The generalization enhanced by flat region minimums has been observed empirically and studied theoretically \citep{dinh2017sharp, keskar2016large,neyshabur2017exploring}. Motivated by this, Sharpness-aware minimizer (SAM)~\citep{sam} optimizes the objective function and sharpness simultaneously to seek the optimum lying in a flat convergence region. However, the computational overhead of SAM is double that of the conventional optimization strategy. To address this issue, ESAM~\citep{esam} randomly selects a subset of the parameters to update in each iteration. Zhuang~\etal~\citep{gsam} observes that SAM fails to identify the sharpness and mitigates this by proposing a novel sharpness proxy. To tackle the complicated loss landscape, Li and Giannakis~\citep{vasso} introduce a momentum-like strategy for sharpness approximation while ASAM~\citep{asam} automatically modify the sharpness reaching range by adapting the local loss landscape geometry. In contrast, we handle complicated multi-iteration unrolling for learning datasets in the many-shot region with both the difficulty of the sharpness approximation and the surge in computation resources.

\cut{Compared to methods that focus on the flatness of single-level training, we investigate this concept within the bilevel optimization framework, specifically exploring the benefits of flatness in outer-loop tasks. Sharp-MAML \citep{sharpmaml} pioneer in this area, enhancing the generalization ability of the learned initialisation in few-shot learning tasks through one-step inner-loop unrolling. In contrast, we handle complicated multi-iteration unrolling for learning datasets in the many-shot region where both the difficulty of approximating the sharpness and the computation resources surge. }

\section{Preliminary}
\cut{We briefly frame Sharpness-Aware minimization and Matching Training Trajectory (MTT), the main baseline model, to pave the way for developing our proposed method. 
}

\subsection{Dataset Condensation and Matching Training Trajectory}
Dataset condensation focuses on synthesizing small datasets with a few representative samples that effectively capture the essence of the original datasets. \citet{mtt} proposed to create the synthetic datasets by minimizing the distance between the training trajectory produced by the synthetic set, named synthetic trajectories, and those by the real set, termed expert trajectories, with the assumption that the datasets containing similar information generate close training trajectories, \emph{a.k.a}, matching training trajectory (MTT). A sequence of expert weight checkpoints, $\theta^E_t$, are collected during the training on the real sets in the order of iterations, $t$, to construct the expert trajectories, $\{\theta^{E}_t\}^T_{t =0}$ with $T$ denoting the total length of the trajectory. The pipeline of MTT begins with sampling a segment of expert trajectory, starting from $\theta^{E}_t$ to $\theta^{E}_{t+M}$ with $0\leq t \leq t+M \leq T$. Then, to generate a synthetic segment, a model, $\theta^S_t$, is initialised by, $\theta^{E}_t$, and trained on the learnable dataset, $\phi$, to get $\theta^{S}_{t+ N}(\phi)$ after $N$ iteration. Following the bilevel optimization terms, the disparity between $\theta^{S}_{t+ N}(\phi)$ and $\theta^{E}_{t+M}$ is optimized to learn the synthetic dataset forming the outer-loop with the optimization problem defined as:
\begin{align}
    \min_{\phi}& \, \mathcal{L}(\theta^S(\phi)) := \frac{1}{\delta}||\theta^{S}_{t+ N}(\phi) - \theta_{t+ M}^{E}||^2_2  \label{eq:mtt}\\
            \text{s.t. } & \theta^{S}_{t+ N}(\phi)  = \Xi_{N}(\theta^{S}_t, \phi) \nonumber
            \cut{
            & \theta^{S}_t  = \theta^{T}_t \\
            & \delta  =||\theta^E_t - \theta^E_{t+M}||_2^2
            }
\end{align} 
where $\Xi_N(\cdot)$ represents $N$ differentiable minimizing steps on the inner-loop objective, CrossEntropy loss, $\mathcal{L}_{CE}(\theta, \phi)$. The existing optimizers can instantiate the inner-loop, such as SGD whose one-step optimization is exemplified by $\Xi(\theta, \phi) = \theta - \alpha \nabla\mathcal{L}_{CE}(\theta, \phi)$ where $\alpha$ denotes the learning rate. Note $M$ and $N$ are not necessarily equal since dense information in the synthetic datasets leads to fast training. $\delta$ in Eq.~\ref{eq:mtt}, stabilising the numerical computation, can be unpacked as $||\theta^E_t - \theta^E_{t+M}||_2^2$.

\subsection{Sharpness-Aware Minimization}
\cut{
We introduce our method in this section starting with reviewing a DC framework, Matching Training Trajectory (MTT)~\citep{mtt}, applied in this work. Then we combine the bilevel optimization with sharpness-aware optimization tailored for dataset condensation with a loss landscape smoothing strategy for accurate sharpness approximation. To efficiently reduce the computational burden introduced by the sharpness-aware minimizers, we design and analyse time and memory-saving hypergradient approximations for the long horizon inner-loop with the general method outlined in Algorithm~\ref{general_algorithm}.
}
Given the training data, $D$, consider a training problem where the objective function is denoted as $\mathcal{L}(\phi; D)$ with the learnable parameter $\phi$, the objective function of SAM is framed as:
\begin{align}
   \min_{\phi} \max_{||\epsilon||_2 \leq \rho} \mathcal{L}(\phi + \epsilon; D),
\end{align}
where approximating sharpness is achieved by finding the perturbation vectors $\epsilon$ maximizing the objective function in the Euclidean ball with radius, $\rho$, with the sharpness defined as:  
\begin{align}
   \max_{||\epsilon||_2 \leq \rho} \big| \mathcal{L}(\phi + \epsilon; D) - \mathcal{L}(\phi ; D) |.
\end{align}
Instead of solving this problem iteratively, a closed-form approximation of the optimality by utilisation of the first-order Taylor expansion of the training loss is given by 
\begin{align*}
   \epsilon = \rho \frac{\nabla \mathcal{L}(\phi)}{|| \nabla \mathcal{L}(\phi)||_p} \approx \argmax_{||\epsilon||\, \leq \rho}\mathcal{L}(\phi + \epsilon).
\end{align*}
Overall, the updating procedure of SAM in each iteration is summarised as follows: 
\begin{align}
 \phi  = \phi - \alpha \nabla\mathcal{L}(\phi +\epsilon) \quad \textbf{s.t.} \,\,\, \epsilon = \rho \frac{\nabla \mathcal{L}(\phi)}{|| \nabla \mathcal{L}(\phi)||_p},
\end{align} 
where $\alpha$ represents the learning rate and after computing the gradient, $\nabla\mathcal{L}(\phi +\epsilon)$, the parameter update procedure is instantiated by standard optimizers, such as SGD and Adam~\citep{kingma2014adam}. Without losing generality, we set $p = 2$ for simplicity for the rest of this work. Due to the two-stage gradient calculation at $\phi$ and $\phi + \epsilon$, the computational overhead of SAM is doubled.

\begin{algorithm}[t]
\small
\caption{\ourModel{} for dataset condensation.}
\label{general_algorithm}
\begin{algorithmic}[1]
\STATE {\bfseries Input: } $\{\theta^E_t\}^T_0$, $\alpha$, $\beta$.
\STATE {\bfseries Output: } $\phi$ 
\STATE Init $\phi$
\WHILE{not converged or reached max steps}
    \STATE Sample an iteration $t$ to construct an expert segment, $\theta^E_t$, and $\theta^E_{t+M}$
    \STATE $\theta^S = \theta^E_{t}$  
    \STATE $\phi^{\Delta}_j \sim \mathcal{N}(0, \gamma ||\phi_j||_2 I )$
    \STATE $\phi = \phi + \phi^{\Delta}$ 
    \FORALL{$i \leftarrow 1 $ to $ N $ }
    \STATE $\theta^S = \theta^S - \alpha \nabla \mathcal{L}(\theta^S, \phi)$
    \ENDFOR
    \STATE Compute $\nabla F(\phi)$ by Eq.~\ref{eq:first_hyper}
    \STATE $\epsilon = \rho \nabla F(\phi)/||\nabla F(\phi)||_2$
    \STATE $\bar{\theta}^S = \theta^S_{t+\kappa}$ 
    \STATE $\phi = \phi - \phi_{\Delta}$
    \FORALL{$i \leftarrow N- \tau $ to $ N $ } 
    \STATE $\bar{\theta}^S = \bar{\theta}^S - \alpha \nabla \mathcal{L}(\bar{\theta}^S, \phi+\epsilon)$ 
    \ENDFOR
    \STATE Compute $\nabla F(\phi+\epsilon)$ by Eq.~\ref{eq:second_hyper}
    \STATE $\phi  = \phi - \beta \nabla F(\phi+\epsilon)$
\ENDWHILE
\end{algorithmic}
\end{algorithm}

\section{Method}
We introduce our method in this section by starting with configuring the trajectory matching-based dataset condensation under the Sharpness-Aware Bilevel optimization framework while handling the inaccurate sharpness approximation. Then, two strategies with mathematical support are proposed to reduce the computation cost introduced by the vanilla application of SAM. Additionally, we further boost the computational efficiency with a closed-form solution for learning rate learning instead of the backpropagation through inner-loop unrolling. The general idea is summarised in Algorithm~\ref{general_algorithm}. 
\subsection{Smooth Sharpness-Aware Minimization for Dataset Condensation}
generalizing to the unseen tasks is challenging for the learned synthetic datasets. To mitigate this issue, we steer the optimization on the outer-loop in Eq.~\ref{eq:mtt} and minimize the objective function forward landing in the flat loss landscape region to enable the synthetic data to be generalized to both in- and out-of-domain settings. This property has been studied in~\citep{petzka2021relative, kaddour2022flat}, in the uni-level optimization. In this work, we forage this into the bilevel optimization framework by integrating Shaprness-Aware minimization. 
To jointly optimize the sharpness of the outer-loop and the distance between the trajectory w.r.t to the synthetic dataset, we maximize the objective function in the $\rho$ regime for the sharpness proxy approximation and then optimize the distance between trajectories according to the gradient computed on the local maximum for the dataset learning. This process is described as follows: 
\begin{align}
   \min_{\phi}\max_{||\epsilon||_2 \leq \rho} & \mathcal{L}( \theta^S (\phi +\epsilon)) =  \frac{1}{\delta} ||\theta^{S}_{t+ N}(\phi+\epsilon) - \theta_{t+ M}^{E}||^2_2  \label{eq:flat_loss} \\
   \text{s.t. } & \theta^{S}_{t+ N}(\phi)  = \Xi_{N}(\theta^{S}_t, \phi) .
\end{align}
We define $F(\phi) = \mathcal{L}( \theta^S_{t+N} (\phi))$ to eliminate the effect of the inner-loop solution on the outer-loop loss value without losing generality. The perturbation vector, $\epsilon$, is computed through a closed-form solution derived through the first-order Taylor expansion of the objective function in Eq.~\ref{eq:mtt}.
\begin{align}
\epsilon = & \argmax_{||\epsilon||_2 \leq \rho} \mathcal{L}( \theta^S (\phi +\epsilon)) = \argmax_{||\epsilon||_2 \leq \rho} F(\phi+\epsilon) \nonumber \\
& \approx \argmax_{||\epsilon||_2 \leq \rho} F(\phi) + \epsilon \cdot \nabla F(\phi) \nonumber \\
&  = \argmax_{||\epsilon||_2 \leq \rho} \epsilon \cdot \nabla F(\phi) \approx \rho \frac{\nabla F(\phi)}{||\nabla F(\phi)||_2}  \label{pertub}.
\end{align}
The closed-form solution given in Eq.~\ref{pertub} can be interpreted as a one-step gradient ascent. However, this one-step gradient ascent may fail to reach the local maximum of the sharpness proxy, due to the high variance of hypergradient caused by the complicated outer-loop loss landscape. This phenomenon has also been observed by~\citep{liu2022random, esam} in the uni-level optimization and will aggravate in the complicated bilevel case~\citep{sharpmaml}. To conduct accurate sharpness approximation, motivated by~\citep{liu2022random, haruki2019gradient, wen2018smoothout, duchi2012randomized}, we introduce fluctuation on the learnable dataset to smooth the landscape. To be more specific, each synthetic image indexed by $j$ is perturbed by a random noise sampled from a Gaussian distribution with a diagonal covariance matrix whose magnitude is proportional to the norm of each image $||\phi_j||$:  
\begin{align*}
\phi_j = \phi_j + \phi_j^{\Delta},\quad \phi_j^{\Delta} \sim \mathcal{N}(0, \gamma ||\phi_j||_2),
\end{align*}
where $\gamma$ is a tunable hyperparameter controlling the fluctuation strength. This process is conducted on the image independently in each one-step gradient ascent.

\subsection{Efficient Sharpness-Aware Minimization in Bilevel Optimization}
One can notice that a one-step update in the outer-loop needs to compute the hypergradient twice with one for the perturbation vector $\epsilon$ and the other for the real update gradient, $\nabla F(\phi)$. Directly computing those two gradients will double the computation cost in contrast with MTT and FTD instead of TESLA which we will discuss later. To alleviate this problem, we proposed two approximation strategies, Truncated Unrolling Hypergradient (TUH) and Trajectory Reusing (TR). 

\textbf{Truncated Unrolling Hypergradient.} The long inner-loop horizon introduces tremendous computational overhead. In our dataset condensation framework, the hypergradient for updating the learnable dataset is computed by differentiating through the unrolled computational graph of the inner-loop. This vanilla hypergradient computation lets the memory cost scale with the number of the inner-loop iterations which is not feasible as condensing the complicated datasets requires long horizon inner-loops. Instead, we \emph{truncate the backpropagation} by only differentiating through the last several steps of the inner-loop. This reduces both the required memory and computational time. More concretely, the truncated hypergradient computation with $N$ step unrolling can be expressed as:
\cut{
\begin{align}
    \frac{\partial F_{\iota}(\phi)}{\partial \phi} = & \frac{\partial \mathcal{L}(\theta(\phi))}{\partial \theta_{\iota}}\frac{\partial \theta_{\iota}}{\partial \phi} \\ 
    = &\sum^{N}_{i = \iota} \frac{\partial \mathcal{L}(\theta(\phi))}{\partial \theta_{N}} \Bigg( \prod^{N}_{i'= i} \frac{\partial \theta_{i'}}{\partial \theta_{i'-1}} \Bigg) \frac{\partial \theta_i}{\partial \phi} , \label{eq:first_hyper}
\end{align}
}
\begin{align}
    \frac{\partial F_{\iota}(\phi)}{\partial \phi} = \sum^{N}_{i = \iota} \frac{\partial \mathcal{L}(\theta(\phi))}{\partial \theta_{N}} \Bigg( \prod^{N}_{i'= i} \frac{\partial \theta_{i'}}{\partial \theta_{i'-1}} \Bigg) \frac{\partial \theta_i}{\partial \phi} , \label{eq:first_hyper}
\end{align}
where $\iota$ controls the number of truncated steps that $N-\iota$ steps of the inner-loop will be differentiated through. In addition, the risk of hypergradient exploding and vanishing caused by the ill-Jacobian $\frac{\partial \theta_i}{\partial \theta_{i-1}}$, which may happen in any inner-loop step, can be reduced. This mechanism can be easily implemented by releasing the computational graph while optimising the inner-loop and then creating the computational graph at a certain iteration with PyTorch-based pseudocode given in Appx.~\ref{pytorch_psed}.

\cut{Following~\citep{truncated_gradient,onestep_gradient},} 
We analyse the discrepancy between hypergradients computed by the truncated and untruncated computational graph in the setting where the synthetic trajectory is produced by optimized from the initialisation $\theta^E_0$ until converge. 
\begin{restatable}{proposition}{gradientSimiliarity}  \label{prop:gradient_similiarity}
Assmue $\mathcal{L}_{CE}$ is $K$-smooth, twice differentiable, and locally $J$-strongly convex in $\theta$ around $\{\theta_{\iota+1},..., \theta_N \}$. Let $\Xi(\theta, \phi) = \theta - \alpha \nabla\mathcal{L}_{CE}(\theta, \phi)$. For $\alpha \leq \frac{1}{K}$, then
\begin{align*}
&\left\| \frac{\partial F(\phi)}{\partial \phi} - \frac{\partial F_{\iota}(\phi)}{\partial \phi} \right\|  \\ 
&\leq 2^{\iota} (1-\alpha J)^{N-\iota+1} \left\| \frac{\partial \mathcal{L}(\theta(\phi))}{\partial \theta_{N}(\phi)} \right\| \max_{i \in \{0 , .. \iota\}}\left  \| \frac{\partial \theta_i}{\partial \phi}\right\|,
\end{align*}
where $\frac{\partial F(\phi)}{\partial \phi}$ denotes the untruncated hypergradient.
\end{restatable}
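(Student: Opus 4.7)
The plan is to express the discrepancy between the untruncated and truncated hypergradients as a finite sum of the $\iota$ ``dropped'' summands, and to bound each factor in those summands via the assumed regularity of $\mathcal{L}_{CE}$. Concretely, I would first write
\begin{align*}
\frac{\partial F(\phi)}{\partial \phi} - \frac{\partial F_{\iota}(\phi)}{\partial \phi} = \sum_{i=0}^{\iota-1} \frac{\partial \mathcal{L}(\theta(\phi))}{\partial \theta_N}\Bigl(\prod_{i'=i}^{N} \frac{\partial \theta_{i'}}{\partial \theta_{i'-1}}\Bigr) \frac{\partial \theta_i}{\partial \phi},
\end{align*}
apply the triangle inequality and sub-multiplicativity of the operator norm, and pull out $\|\partial\mathcal{L}/\partial\theta_N\|$ together with $\max_{i\in\{0,\dots,\iota\}}\|\partial\theta_i/\partial\phi\|$ as common factors, so that only a sum of Jacobian-product norms remains to be estimated.

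Next, I would bound the per-step Jacobians $\partial\theta_{i'}/\partial\theta_{i'-1} = I - \alpha\nabla^2\mathcal{L}_{CE}(\theta_{i'-1},\phi)$ by a spectral case split on the base point. For base points outside the local strongly convex region, only $K$-smoothness is available, so the spectrum of $I - \alpha\nabla^2\mathcal{L}_{CE}$ lies in $[1-\alpha K,\,1+\alpha K]\subset[0,2]$ when $\alpha\le 1/K$, giving operator norm at most $2$. For base points in the $J$-strongly convex neighborhood around $\{\theta_{\iota+1},\dots,\theta_N\}$, the spectrum tightens to $[1-\alpha K,\,1-\alpha J]$, giving operator norm at most $1-\alpha J$. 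Each summand indexed by $i<\iota$ then contains on the order of $\iota-i$ ``smooth-only'' Jacobians and at least $N-\iota+1$ ``strongly convex'' Jacobians, so its Jacobian-product norm is dominated by a constant multiple of $2^{\iota-i}(1-\alpha J)^{N-\iota+1}$.

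I would finish by summing the geometric series $\sum_{i=0}^{\iota-1} 2^{\iota-i}$, which is bounded by $2^{\iota+1}$, and absorbing the leftover constant into the leading prefactor to obtain
$\|\partial F/\partial\phi - \partial F_\iota/\partial\phi\| \le 2^{\iota}(1-\alpha J)^{N-\iota+1}\|\partial\mathcal{L}/\partial\theta_N\|\max_i\|\partial\theta_i/\partial\phi\|$, matching the statement. The individual spectral bounds and the collapse of the geometric sum are both standard once the decomposition and common-factor extraction have been set up.

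The main obstacle is the index bookkeeping: verifying that, for each dropped summand, the strongly convex Hessians appear exactly where the local convexity hypothesis is applicable (in particular, whether $\theta_\iota$ itself can legitimately be treated as lying in the neighborhood around $\theta_{\iota+1}$), and confirming that the geometric sum over $i=0,\dots,\iota-1$ collapses into the stated constant without retaining any $i$-dependence or disturbing the $\max$ over $\|\partial\theta_i/\partial\phi\|$. The spectral estimates themselves are routine once this accounting is pinned down.
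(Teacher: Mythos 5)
Your proposal follows essentially the same route as the paper's own proof: decompose the error into the dropped summands, bound the per-step Jacobians $I-\alpha\nabla^2\mathcal{L}_{CE}$ spectrally by $1-\alpha J$ on the locally strongly convex tail and by $1+\alpha K\le 2$ on the head, and collect the geometric factors (the paper factors out the common tail product $A_{\iota+1}\cdots A_N$ before bounding, but this is equivalent to your term-by-term estimate). The looseness you flag --- the geometric sum giving $2^{\iota+1}$ rather than $2^{\iota}$, and the exact placement of the strong-convexity hypothesis at the boundary index --- is present in the paper's argument as well, so your account matches it.
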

The Proposition~\ref{prop:gradient_similiarity} shows that the error of the truncated hypergradient decreases exponentially in $N- \iota + 1$ when $\theta$ converges to the neighbourhood of a local minimum in the inner-loop and the proof is given in Appx.~\ref{proof:gradient_similiarity}. 

\textbf{Trajectory Reusing.} 
The sharpness-aware minimization requires computing the gradient twice for sharpness proxy approximation and free parameter update, which means in bilevel optimization the inner-loop is required to unroll twice. This boosts the computational spending and slows down the training speed when inner-loops comprise long trajectories. To improve the efficiency of training by benefiting from the existing knowledge, we propose to reuse the trajectory generated by the first round of inner-loop unrolling. We denote the trajectories generated by training on the perturbed dataset as $\hat\theta_i(\phi+\epsilon)$. Other than unrolling the entire second trajectory initialised by the expert segment, the training is initialised by the middle point, indexed by $\tau$, from the first trajectory $\hat{\theta}_{\tau}(\phi + \epsilon):= \theta_{\tau}(\phi)$. Note that the hypergradient for the dataset update is truncated implicitly since this hypergradient approximation will not consider the steps earlier than $\tau$ which is further constrained, $\tau \geq \iota$. Coupled with the same truncated strategy for the first round, the hypergradient in the second trajectory is computed as:
\cut{
\begin{equation}
    \frac{\partial F_{\tau,\epsilon}(\phi)}{\partial \phi}=  \frac{\partial \mathcal{L}(\theta(\phi))}{\partial \theta_{\tau}}\frac{\partial \theta_{\tau}}{\partial \phi} = \sum^{N}_{i = \tau} \frac{\partial \mathcal{L}(\theta(\phi))}{\partial \theta_{N}} \Bigg( \prod^{N}_{i'= i} \frac{\partial \theta_{i'}}{\partial \theta_{i'-1}} \Bigg) \frac{\partial \theta_i}{\partial \phi}\Bigg|_{\phi = \phi + \epsilon, \,\, \hat{\theta}_{\tau}(\phi+\epsilon) = \theta_{\tau}(\phi)} \label{eq:second_hyper}
\end{equation}
}
\begin{align}
    &\frac{\partial F_{\tau,\epsilon}(\phi)}{\partial \phi}  \nonumber \\ 
    & = \sum^{N}_{i = \tau} \frac{\partial \mathcal{L}(\theta(\phi))}{\partial \theta_{N}} \Bigg( \prod^{N}_{i'= i} \frac{\partial \theta_{i'}}{\partial \theta_{i'-1}} \Bigg) \frac{\partial \theta_i}{\partial \phi}\Bigg|_{\phi = \phi + \epsilon, \,\, \hat{\theta}_{\tau}(\phi+\epsilon) = \theta_{\tau}(\phi)}. \label{eq:second_hyper}
\end{align}
One may notice that the trajectory reusing strategy assumes the difference between two trajectories before step $\tau$ can be ignored. To rigorously study the effect of this assumption, we analyse the distance between $\theta_{\tau}(\phi)$ and $\theta_{\tau}(\phi+\epsilon)$. Similar to the Growth recursion lemma~\citep{hardt2016train} applied to upper-bound the difference between two weight points of two different trajectories trained by the dataset with only one data point difference. We develop the bound for the difference between two weight points at the same iteration of their trajectories generated by the datasets with and without perturbation below. The proof is provided in Appx.\ref{proof:traj_difference}. 
\begin{restatable}{theorem}{trajDifference} 
\label{theorem:traj_difference}
Let $\mathcal{L}(\phi, \theta)$ be a function that is $\sigma$-smooth and continuous with respect to its arguments $\phi$ and $\theta$. Additionally, let the second-order derivatives $\nabla_{\phi} \nabla_{\theta} \mathcal{L}(\phi, \theta)$ be $\beta$-continuous. Consider two trajectories obtained by conducting gradient descent training on the datasets $\phi$ and $\phi + \epsilon$, respectively, with a carefully chosen learning rate $\alpha$ and identical initializations. After $\tau$ steps of training, let $\Delta \theta_\tau = \hat{\theta}_\tau(\phi+\epsilon) - \theta_\tau(\phi)$. Then, we have:
\begin{align*}
\left\|\Delta \theta_{\tau} \right\| \leq \alpha \tau (2\sigma + \beta \rho).
\end{align*}
\end{restatable}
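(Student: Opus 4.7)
The plan is to exploit the shared initialization $\hat{\theta}_0(\phi+\epsilon) = \theta_0(\phi)$ together with a telescoping decomposition. Since $\Delta\theta_0 = 0$, I can write
\begin{align*}
\Delta\theta_\tau &= \sum_{i=1}^{\tau} \bigl[(\hat{\theta}_i - \theta_i) - (\hat{\theta}_{i-1} - \theta_{i-1})\bigr] \\
&= -\alpha \sum_{i=1}^{\tau} \bigl[\nabla_\theta \mathcal{L}(\phi+\epsilon, \hat{\theta}_{i-1}) - \nabla_\theta \mathcal{L}(\phi, \theta_{i-1})\bigr],
\end{align*}
using the one-step gradient descent updates for each trajectory. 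By the triangle inequality, bounding $\|\Delta\theta_\tau\|$ reduces to bounding each gradient-difference summand separately.

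For each summand I would insert the intermediate point $\nabla_\theta \mathcal{L}(\phi+\epsilon, \theta_{i-1})$ to split the difference as
\begin{align*}
\bigl[\nabla_\theta \mathcal{L}(\phi+\epsilon, \hat{\theta}_{i-1}) - \nabla_\theta \mathcal{L}(\phi+\epsilon, \theta_{i-1})\bigr] + \bigl[\nabla_\theta \mathcal{L}(\phi+\epsilon, \theta_{i-1}) - \nabla_\theta \mathcal{L}(\phi, \theta_{i-1})\bigr].
\end{align*}
The first bracket captures the effect of parameter drift at step $i-1$; I would bound it by $2\sigma$ using the triangle inequality together with the $\sigma$-smoothness assumption, read here as giving the uniform bound $\|\nabla_\theta \mathcal{L}\| \leq \sigma$. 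The second bracket captures the effect of the dataset perturbation; I would express it in integral form via the mean value theorem as $\int_0^1 \nabla_\phi \nabla_\theta \mathcal{L}(\phi + s\epsilon, \theta_{i-1})\,\epsilon\, ds$, and bound it by $\beta\|\epsilon\| \leq \beta\rho$ via the $\beta$-continuity of $\nabla_\phi \nabla_\theta \mathcal{L}$. Summing the per-step bound $2\sigma + \beta\rho$ over $i = 1,\ldots,\tau$ and multiplying by $\alpha$ delivers the claimed $\alpha\tau(2\sigma + \beta\rho)$.

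The main obstacle is avoiding an exponential-in-$\tau$ blowup. If one naively bounds the parameter-drift term by $\sigma\|\Delta\theta_{i-1}\|$ (the most natural reading of $\sigma$-smoothness) and proceeds by induction, the recursion $\|\Delta\theta_i\| \leq (1+\alpha\sigma)\|\Delta\theta_{i-1}\| + \alpha\beta\rho$ produces a bound that grows like $(1+\alpha\sigma)^\tau$ rather than the stated linear-in-$\tau$ rate. Achieving the linear bound therefore hinges on reading "$\sigma$-smooth" as the uniform gradient bound $\|\nabla_\theta \mathcal{L}\| \leq \sigma$, which lets one bypass the Lipschitz-in-$\theta$ route entirely and charge the parameter-drift term a flat $2\sigma$ per step. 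I would flag this interpretation, and the analogous reading of $\beta$-continuity as $\|\nabla_\phi \nabla_\theta \mathcal{L}\| \leq \beta$, at the very start of the proof; under those conventions the telescoping argument above closes in a few lines.
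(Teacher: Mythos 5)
Your proof is correct and follows essentially the same route as the paper's: telescope the two gradient-descent recursions from the shared initialization, split each per-step gradient difference into a $\theta$-drift term bounded by $2\sigma$ and a $\phi$-perturbation term bounded by $\beta\rho$, and sum over $\tau$ steps. Your reading of $\sigma$-smoothness as a uniform gradient-norm bound is precisely what the paper's proof silently relies on (it bounds $\|\nabla\mathcal{L}(\phi,\hat{\theta}_i)-\nabla\mathcal{L}(\phi,\theta_i)\|$ by $2\sigma$ via the triangle inequality, which a Lipschitz-gradient reading would not give without the exponential recursion you flag), and your integral mean-value treatment of the perturbation term is in fact slightly more rigorous than the paper's first-order Taylor step, which is stated only as an approximation.
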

This theorem tells us that the bound of the distance of those two points is associated with the learning rate and the number of iterations. Thus, when the learning rate and $\tau$ are selected reasonably, $\theta_{\tau}(\phi)$ approximate $\hat{\theta}_{\tau}(\phi + \epsilon)$ properly. In addition, we set $\tau = \iota$ in our experiments to reduce the hyperparameter tuning efforts even though tuning them separately may achieve better results. We compare the time and memory complexity of our method and Reverse Model Reverse Mode Differentiation (RMD) used in MTT~\citep{mtt} and FTD~\citep{ftd} in Table~\ref{tab:computational_complexity} to exhibit the efficiency provided by our method.
\cut{
\begin{table}[t]
    \centering
    \caption{The computational complexity comparison for the sharpness-aware based bilevel optimization. The approaches include fully unrolling hypergradient which is further divided into Forward Mode Differentiation (FMD) and Reverse Mode Differentiation (RMD) \citep{franceschi2017forward}, and TUH in time and memory cost. $c$ is the time cost for computing $\Xi(\theta, \phi )$ with $\theta \in R^{P}$ and $\phi \in R^{Q}$. P and Q denote the dimensions of the base model and synthetic dataset}
    \label{tab:computational_complexity}
    \begin{tabular}{c | c c }
    \hline
      Methods   &   Time  & Memory  \\
    \hline
      FMD       &   $\mathcal{O}(2cPN )$  & $\mathcal{O}(2PQ)$     \\
      RMD       &   $\mathcal{O}(2cN )$   & $\mathcal{O}(2PN)$     \\
      TUH       &   $\mathcal{O}(cN + c \tau )$  &  $\mathcal{O}(P(N-\iota) + P(N-\tau))$   \\
    \hline
    \end{tabular}
\end{table}
}
\begin{table}[t]
    \centering
    \caption{The computational complexity comparison for different trajectory matching based algorithms in time and memory cost. $c$ is the time cost for computing $\Xi(\theta, \phi )$ with $\theta \in R^{P}$ and $\phi \in R^{Q}$. P and Q denote the dimensions of the base model and synthetic dataset.}
    \label{tab:computational_complexity}
    \begin{tabular}{c | c c }
    \hline
      Methods   &   Time  & Memory  \\
    \hline
      MTT, FTD                 &   $\mathcal{O}(cN )$           & $\mathcal{O}(PN)$     \\
      TESLA                    &   $\mathcal{O}(2cN )$          & $\mathcal{O}(P)$     \\
      TUH + TR                 &   $\mathcal{O}(cN + c \tau )$  &  $\mathcal{O}(P(N-\iota))$   \\
    \hline
    \end{tabular}
\end{table}

\begin{align}
\alpha = \alpha - \lambda \frac{\partial \mathcal{L}(\theta_N(\phi))}{\partial \theta_{N}} \cdot \Bigg( - \sum^{N-1}_{i = 0} \frac{\partial \mathcal{L}_{CE}(\theta_i, \phi)}{\partial \, \theta_{i}}\Bigg), \label{eq:lr_learning}
\end{align}
where $\lambda$ indicates the learning rate for the learning rate learning. This closed-form solution only aggregates the gradient of each step which only requires first-order derivative computation. We compare the learning rate learning dynamic produced by first-order (our method) and second-order derivatives, demonstrating limited differences between those two methods. The derivation and the details of experiments are given in Appx.~\ref{lr_fod}.

In essence, \ouracronym{} is designed to conduct efficient sharpness minimization in the outer-loop of the bilevel optimization-based dataset condensation methods and the proposed efficiency strategies, including THU and TR, are flexible enough to adapt to other advanced sharpness-aware optimizers such as ASAM~\citep{asam} and Vasson~\citep{vasso}. 

\begin{table*}[t]
    \centering
    \caption{Test Accuracy (\%) Comparison of different image per category (IPC) setting on Cifar10, Cifar-100 and Tiny ImageNet: the models are trained on the syntactic dataset learned by MTT and our method independently and evaluated on the corresponding test set with real images. We cite the results of DC, DM and MMT from FTD~\citep{ftd}.} 
     \label{tab:indomain}
    \resizebox{0.85\textwidth}{!}{
    \begin{tabular}{c c | c c c c c c c c}
    \toprule
      Method      & IPC    & DC & DSA & DM   & MTT       & FTD    & TESLA    & MDC      & Ours  \\
\midrule
      Cifar-10    &  1     & $28.3_{\pm0.5}$ & $28.8_{\pm0.7}$ & $26.0_{\pm0.8}$ & $46.2_{\pm0.8}$ & $46.8_{\pm0.3}$           & $48.5_{\pm0.8}$ &  $47.5_{\pm0.4}$ & $\textbf{49.0}_{\pm0.3}$ \\
                  &  10    & $44.9_{\pm0.5}$ & $52.1_{\pm0.6}$ & $48.9_{\pm0.6}$ & $65.4_{\pm0.7}$ & $66.6_{\pm0.3}$           & $66.4_{\pm0.8}$ &  $66.7_{\pm0.7}$ & $\textbf{67.1}_{\pm0.3}$ \\
                  &  50    & $53.9_{\pm0.5}$ & $60.6_{\pm0.5}$ & $63.0_{\pm0.4}$ & $71.6_{\pm0.2}$ & $73.8_{\pm0.3}$           & $72.6_{\pm0.7}$ &  $73.7_{\pm0.3}$ & $\textbf{73.9}_{\pm0.2}$ \\
\midrule
      Cifar-100   &  1     & $12.8_{\pm0.3}$ & $13.9_{\pm0.3}$ & $11.4_{\pm0.3}$ & $24.3_{\pm0.3}$ & $25.2_{\pm0.2}$           & $24.8_{\pm0.4}$ & $25.9_{\pm0.2}$  & $\textbf{26.1}_{\pm0.4}$ \\
                  &  10    & $25.2_{\pm0.3}$ & $32.3_{\pm0.3}$ & $29.7_{\pm0.3}$ & $39.7_{\pm0.4}$ & $\textbf{43.4}_{\pm0.3}$  & $41.7_{\pm0.3}$ & $42.7_{\pm0.6}$  & $43.1_{\pm0.5}$ \\
                  &  50    & -               & $42.8_{\pm0.4}$ & $43.6_{\pm0.4}$ & $47.7_{\pm0.2}$ & $50.7_{\pm0.3}$  & $47.9_{\pm0.3}$ & $49.6_{\pm0.4}$  & $\textbf{50.9}_{\pm0.5}$ \\
\midrule
      TinyImageNet&  1     & -               & -               & $3.9_{\pm0.2}$  &  $8.8_{\pm0.3}$ & $10.4_{\pm0.3}$           & $7.8_{\pm0.2}$ & $9.9_{\pm0.2}$    & $\textbf{10.9}_{\pm0.2}$   \\
                  &  10    & -               & -               & $12.9_{\pm0.4}$ & $23.2_{\pm0.2}$ & $24.5_{\pm0.2}$           & $20.8_{\pm0.9}$ & $24.8_{\pm0.4}$  & $\textbf{25.4}_{\pm0.4}$     \\
                  &  50    & -               & -               & $24.1_{\pm0.3}$ & $28.0_{\pm0.3}$ & $28.2_{\pm0.3}$           & $27.8_{\pm1.1}$ & $28.1_{\pm0.2}$  & $\textbf{29.4}_{\pm0.3}$     \\
\cut{
\midrule
ImageNet-1K    & 1   &  - & - & - & - & -        & $7.7_{\pm0.2}$    &-        & $\textbf{8.2}_{\pm0.4}$     \\ 
                      & 10   &  - & - & - & - & -         & $17.8_{\pm1.3}$    &-        & $\textbf{18.5}_{\pm0.9} $    \\ 
                      & 50   &  - & - & - & - & -         & $27.9_{\pm1.2}$    &-        & $\textbf{28.4}_{\pm1.1}$     \\ 
}
    \bottomrule
    \end{tabular}
   } 
\end{table*}
\section{Experiments}
We evaluate \ouracronym{} on various in-domain tasks where the neural architecture and data distribution on the dataset learning and test stage are the same with different datasets and different numbers of images per category (IPC). Besides, cross-architecture and cross-task evaluation are conducted to demonstrate the generalization achieved in sharpness minimization in out-of-domain settings. All experiment configurations including the dataset and architecture details are given in Appx.~\ref{experiment_setting_details}.

\subsection{Main Results}
\textbf{Popular Benchmark.} We compare our method against the other dataset condensation techniques, such as DC~\citep{dc2021}, DSA~\citep{dsa2021}, DM~\citep{dm}, MTT\citep{mtt}, FTD~\citep{ftd}, TESLA~\citep{TESLA} and MDC~\citep{he2024multisize}. The results from Table~\ref{tab:indomain} demonstrate the benefits of the flat minima that \ouracronym{} outperforms the competitors on almost all the settings of the standard dataset condensation benchmarks with various IPCs. This benefit can be further observed in the high-resolution image condensation task in Table~\ref{tab:ImageNet}. Note that in our case, we merely build \ouracronym{} up on Vanilla MMT~\citep{mtt} without integrating the flat trajectory trick in FTD and the soft label in TESLA. Still, there are clear improvement margins over other trajectory-matching-based DC competitors. 
\cut{
Limited by the computational resource, we cannot conduct full batch training on Cifar100 with 10 IPC, 50 IPC and Tiny ImageNet with 10 IPC as that utilised on MTT and FTD, which we believe is the main reason that \ouracronym{} performs slightly worse than FTD on the Cifar100 with 10 IPC setting. Besides, there are clear improvement margins over other trajectory-matching-based DC competitors. }
\begin{table}[t]
    \centering
    \small
    \caption{Test accuracy (\%) comparison on the Subsets of ImageNet including ImageNette, ImageWoor, ImageFruit and ImageNeow with high-resolution ($128\times128$): All the syntactic datasets are learned and tested on ConvNet with 10 IPC.}
    \resizebox{\linewidth}{!}{\begin{tabular}{ccccc}
\toprule
                  & ImageNette & ImageWoof & ImageFruit & ImageMeow               \\ \midrule
MTT           & $63.0_{\pm 1.3}$   & $35.8_{\pm1.8}$   & $40.3_{\pm1.3}$   & $40.4_{\pm2.2}$   \\
FTD           & $67.7_{\pm 0.7}$   & $38.8_{\pm1.4}$   & $44.9_{\pm1.5}$   & $43.3_{\pm0.6}$  \\ 
Ours          & $\mathbf{68.2}_{\pm 0.5}$   & $\mathbf{39.4}_{\pm1.2}$   & $\mathbf{45.2}_{\pm1.3}$   & $\mathbf{45.4}_{\pm0.9}$  \\ 
\midrule
All           & $87.4_{\pm 1.0}$   & $67.0_{\pm1.3}$   & $63.9_{\pm2.0}$   & $66.7_{\pm1.1}$  \\ 
\bottomrule
    \label{tab:ImageNet}
    \end{tabular}
    }
\end{table}

\textbf{ImageNet Comparison with TESLA.} Due to the high memory cost of the trajectory matching-based dataset condensation methods, most existing works fail to distil synthetic datasets from ImageNet~\citep{imagenet}. TESLA~\citep{TESLA} trades off time complexity and performance in its two inner loops to maintain a constant memory cost, which is the first trajectory-matching work enabling the ImageNet condensation. Our method has a similar training protocol with TESLA, as both require executing the inner-loop twice to execute outer-loop updates. However,  In contrast, our model also achieves constant memory usage by differentiating through the last N steps of the inner-loop thanks to the provable hypergradient approximation error bound. Moreover, it requires only a partial second inner-loop execution and aims to converge into a flat loss region improving the generalization of synthetic data significantly, outperforming TESLA even without relying on soft-label fitting tricks in terms of both generalization ability and the running time cost, shown in Table~\ref{tab:TESLA_satm_time_accuracycomparison}. 
\begin{table}[t]
\centering
\small
\caption{Accuracy Comparison of TESLA and SATM across different IPCs on ImageNet-1K and average time cost (sce) comparison with 50 inner-loop iterations and 50 IPCs.}
\begin{tabular}{c|ccc|c}
\toprule
Model/IPC &  1                 &  10           &  50     &   Time Cost                      \\ \midrule
TESLA     &  $7.7_{\pm0.2}$       &  $17.8_{\pm1.3}$        &  $27.9_{\pm1.2}$ & $46.8_{\pm0.3}$   \\ 
SATM      &  $\textbf{8.9}_{\pm0.3}$  & $\textbf{19.2}_{\pm0.9}$   & $\textbf{29.2}_{\pm1.1}$ & $\textbf{44.7}_{\pm0.2}$ \\
\bottomrule
\end{tabular}
\label{tab:TESLA_satm_time_accuracycomparison}
\end{table}

\textbf{Cross Architecture.} In this work, we are also interested in studying whether the advantages brought by the flatness can be observed in cross-architecture tasks, which leads to numerous practical applications. In Table~\ref{tab:across_architecture}, the synthetic datasets by learned \ouracronym{} for Cifar10 exhibit strong generalization ability across the unseen architectures on both IPC 10 and 50 settings over the candidate architectures in comparison with those learned by MTT~\citep{mtt}, FTD~\citep{ftd}. Additionally, one can notice that the performance of the learned dataset from the in-domain setting is not guaranteed in the cross-architecture setting. For instance, FTD performs similarly to \ouracronym{} in the Cifar10 with 10 and 50 IPC settings when deploying on ConvNet in the dataset learning stage. However, the performance gaps become remarkable once the same datasets are used across architectures.
\begin{table}[t]
    \centering
    \small
    \caption{Test accuracy (\%) comparison on Cifar10 with 10 and 50 images per class setting: the syntactic datasets by MTT, FTD and our algorithm are learned on ConvNet and tested on AlexNet, VGG11 and ResNet18.}
    \resizebox{\linewidth}{!}{
    \begin{tabular}{c c| c c c c}
    \toprule
      Methods   &   IPC  & ConvNet & AlexNet & VGG11 &  ResNet18  \\
    \hline
      MTT    & \multirow{ 3}{*}{10}   &  $64.3_{\pm0.7}$            & $34.2_{\pm2.6}$           & $50.3_{\pm0.8}$           & $46.4_{\pm0.6}$    \\
      FTD                    &        &  $66.6_{\pm0.4}$            & $36.5_{\pm1.1}$           & $50.8_{\pm0.3}$           & $46.2_{\pm0.7}$    \\
      Ours                   &        &  $\textbf{67.1}_{\pm0.5}$   & $\textbf{37.8}_{\pm0.8}$  & $\textbf{51.4}_{\pm0.3}$  & $\textbf{47.7}_{\pm0.4}$  \\
    \hline
      MTT          & \multirow{ 3}{*}{50}       &  $71.6_{\pm0.2}$          & $48.2_{\pm1.0}$           & $55.4_{\pm0.8}$  & $61.9_{\pm0.7}$    \\
      FTD                   &                   &  $73.8_{\pm0.2}$          & $53.8_{\pm0.9}$           & $58.4_{\pm1.6}$  & $65.7_{\pm0.3}$    \\
      Ours                  &                   &  $\textbf{74.2}_{\pm0.3}$ & $\textbf{56.9}_{\pm0.7}$  & $\textbf{63.5}_{\pm1.1}$  & $\textbf{66.1}_{\pm0.5}$    \\
    \bottomrule
    \end{tabular}
    }
    \label{tab:across_architecture}
\end{table}

\textbf{Continual Learning.} We expose the learned dataset to the task incremental setting, following the same protocol discussed in Gdumb~\citep{gdumb} for a fair comparison with datasets produced by competitors such as DM~\citep{dm}, MTT~\citep{mtt}, and FTD~\citep{ftd}. Typically, models encounter a sequence of data from different categories and lose access to data from previous categories after training. A limited memory budget is available to save dataset information from previous tasks, enabling models to retain gained knowledge while adapting to new tasks. In Figure~\ref{fig:continual_learning}, we show that at each stage, as new categories are received, our learned datasets consistently outperform others in three settings: 5-task incremental with 50 images per category on Cifar10, 10-and 20-task incremental with 3 IPC on Tiny ImageNet. Given the result in Fig~\ref{fig:continual_learning}, \ouracronym{} consistently outperforms other methods. \cut{whenever the models encounter new tasks on all the settings.}
\begin{figure*}[t]
    \centering
    \includegraphics[width=0.33\linewidth]{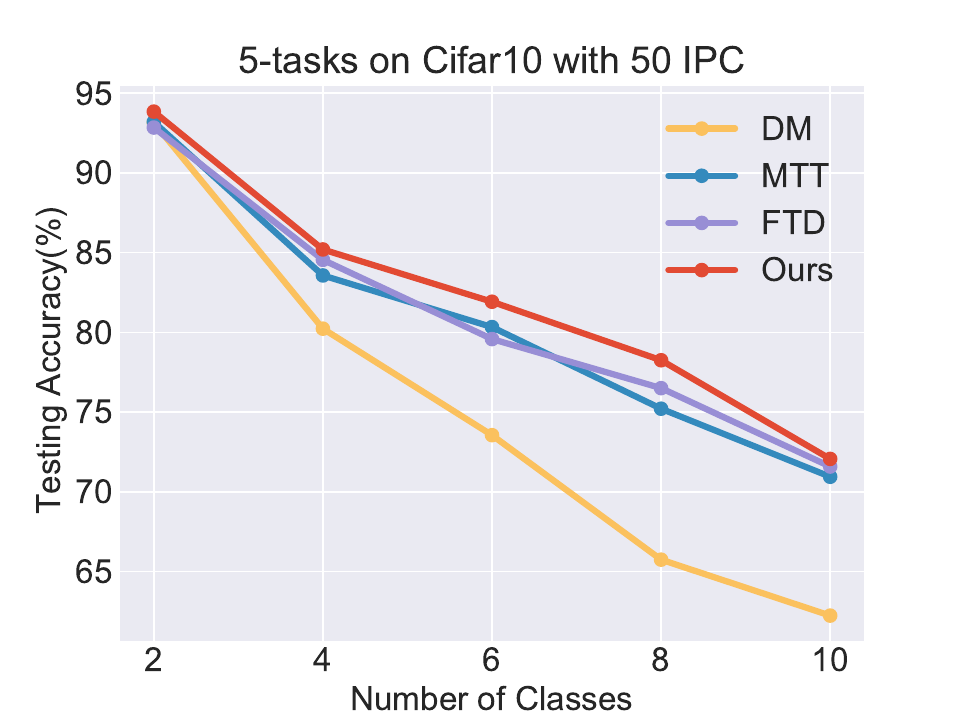}
    \includegraphics[width=0.33\linewidth]{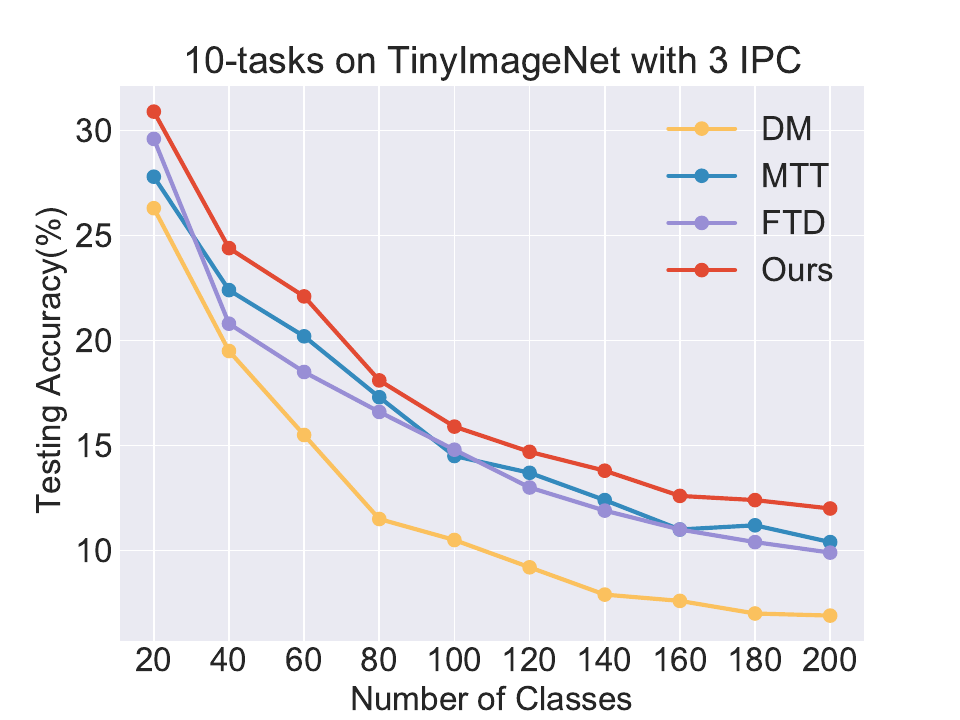}
    \includegraphics[width=0.33\linewidth]{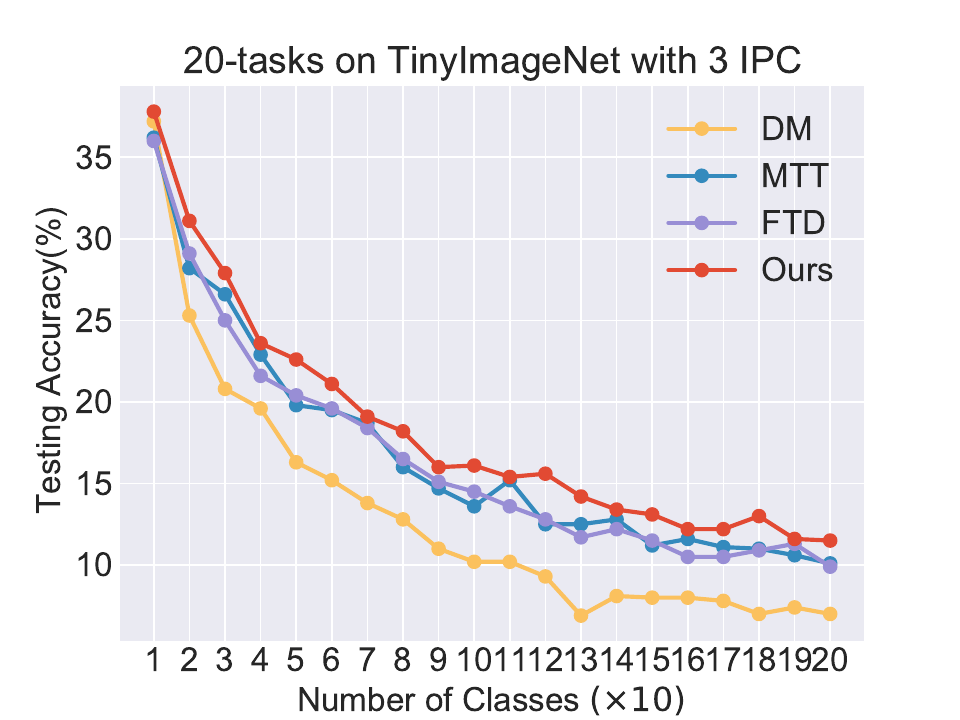}
\caption{Test accuracy (\%) comparison on continual learning. Left: 5-step class-incremental learning on Cifar10 50IPC, Middle: 10-step class-incremental learning on Tiny ImageNet 3IPC, Right: 20-step class-incremental learning on Tiny ImageNet 3IPC.}
\label{fig:continual_learning}
\end{figure*}

\subsection{Further Analysis}
\textbf{Computational Cost Comparison.}
We computed and recorded the memory and time costs when running SATM and then compared them with MTT and TESLA. The results were measured on a single NVIDIA A6000 GPU, except for MTT on ImageNet-1K~\citep{imagenet}, which required two A6000 GPUs.

In our experiments, at most only one-third of the inner-loop is retained to compute the hypergradients for sharpness approximation. Given the result in Table~\ref{tab:time_comparison}, our strategy significantly reduces memory consumption compared to MTT, enabling the dataset to be trained on a single A6000 GPU. Regarding time cost, SATM consistently outperforms the two inner-loop-based algorithms, TESLA. Additionally, SATM even consumes less time than MTT which requires retaining a full single inner-loop.
\begin{table}[t]
\centering
\small
\begin{tabular}{c|cc| cc}
\toprule
Model         & \multicolumn{2}{c}{CIFAR-100}          & \multicolumn{2}{c}{ImageNet-1K} \\ 
              &   Memory         &   Runtime       &      Memory  &     Runtime  \\ \midrule
MTT           & $17.1_{\pm0.1}$  & $12.1_{\pm0.6}$ & $26.6_{\pm0.1}$ & $45.9_{\pm0.5}$ \\
TESLA         & $ 3.6_{\pm0.1}$  & $15.3_{\pm0.5}$ & $26.6_{\pm0.1}$ & $47.4_{\pm0.7}$ \\
SATM          & $ 5.7_{\pm0.1}$  & $12.0_{\pm0.5}$ & $26.6_{\pm0.1}$ & $45.4_{\pm0.4}$ \\
\bottomrule
\end{tabular}
\caption{GPU memory (GB) and runtime (sec) comparison among MTT, TESLA and SATM on CIFAR100 and ImageNet-1K with results measured with a batch size of 100 and 50 inner-loop steps. }
\label{tab:time_comparison}
\end{table}

\textbf{Hypergradient Analysis.} To illustrate the effects of sharpness minimization on the process of synthetic dataset learning, we record the hypergradient norm of MTT and \ouracronym{} during training and report their mean and variance over training iterations. Depicted in Fig~\ref{fig:Hypergradient_Norm}, \ouracronym{} has a smaller mean and variance than MTT on Cifar100 with 3 IPC and Tiny ImnageNet 3IPC. Additionally, fewer spikes of hypergraident in \ouracronym{} can be observed, indicating more stable training. Moreover, the dynamic of the sharpness, measured by $\mathcal{L}(\phi+\epsilon) - \mathcal{L}(\phi)$, with decreasing trend shows that the synthetic dataset is landing into the flat loss region, which explains why our method enjoys better generalization ability.

\textbf{Aligning with Curriculum Learning.} DATM~\citep{datm} utilizes the difficulty of training trajectories to implement a curriculum learning-based dataset condensation protocol along with the soft label alignment. This approach is distinct from research focused on optimization efficiency and generalization, such as TESLA, FTD, and SATM which prioritize optimization efficiency through gradient approximation. In contrast, our work focuses on the difficulty of the loss landscape which is compatible with the curriculum learning strategy. We conduct experiments combining DATM's easy-to-hard training protocol and the soft label alignment with \ouracronym{} denoted as \ouracronym{}-DA, yielding the results in Table~\ref{tab:comparison_ipc_datm_satm}.
\begin{table}[t]
\centering
\small
\caption{Accuracy (\%) Comparison of DATM, and SATM-DA across various IPCs, datasets and configurations.}
\begin{tabular}{lc|cccc}
\toprule
                & IPC & DATM & SATM-DA \\ \midrule
\cut{
                & 1   & $46.9_{\pm0.5}$ & $\mathbf{48.6}_{\pm0.4}$ \\ 
CIFAR-10        & 10  & $66.8_{\pm0.2}$ & $\mathbf{68.1}_{\pm0.3}$ \\ 
                & 50  & $76.1_{\pm0.3}$ & $\mathbf{76.4}_{\pm0.6}$ \\ \midrule
}
                & 1   & $27.9_{\pm0.2}$ & $\mathbf{28.2}_{\pm0.8}$ \\ 
CIFAR-100       & 10  & $47.2_{\pm0.4}$ & $\mathbf{48.3}_{\pm0.4}$ \\ 
                & 50  & $55.0_{\pm0.2}$ & $\mathbf{55.7}_{\pm0.3}$ \\ \midrule
Tiny-ImageNet   & 1   & $\mathbf{17.1}_{\pm0.3}$ & $16.4_{\pm0.4}$ \\ 
                & 10  & $31.1_{\pm0.3}$ & $\mathbf{32.3}_{\pm0.6}$ \\ 
                & 50  & $39.7\pm0.3$ & $\mathbf{40.2}\pm0.7$ \\ 
\bottomrule
\end{tabular}
\label{tab:comparison_ipc_datm_satm}
\end{table}

\begin{figure}[!h]
    \centering
    \includegraphics[width=0.7\linewidth]{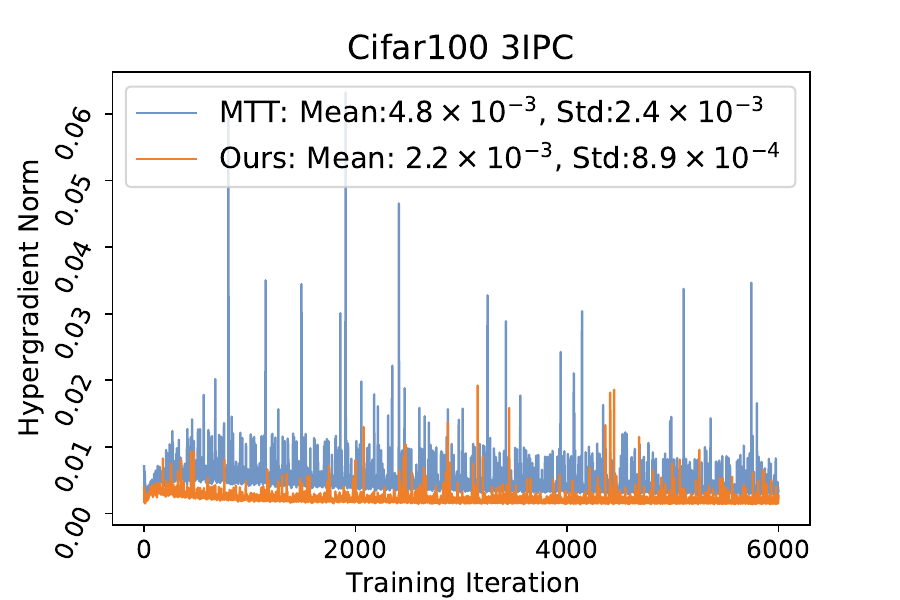}
    \includegraphics[width=0.7\linewidth]{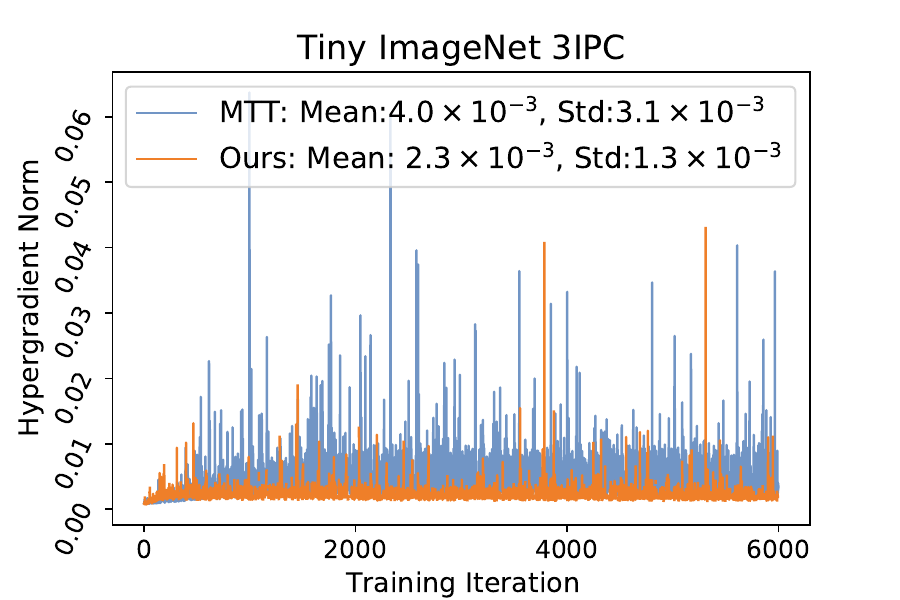}
    \includegraphics[width=0.7\linewidth]{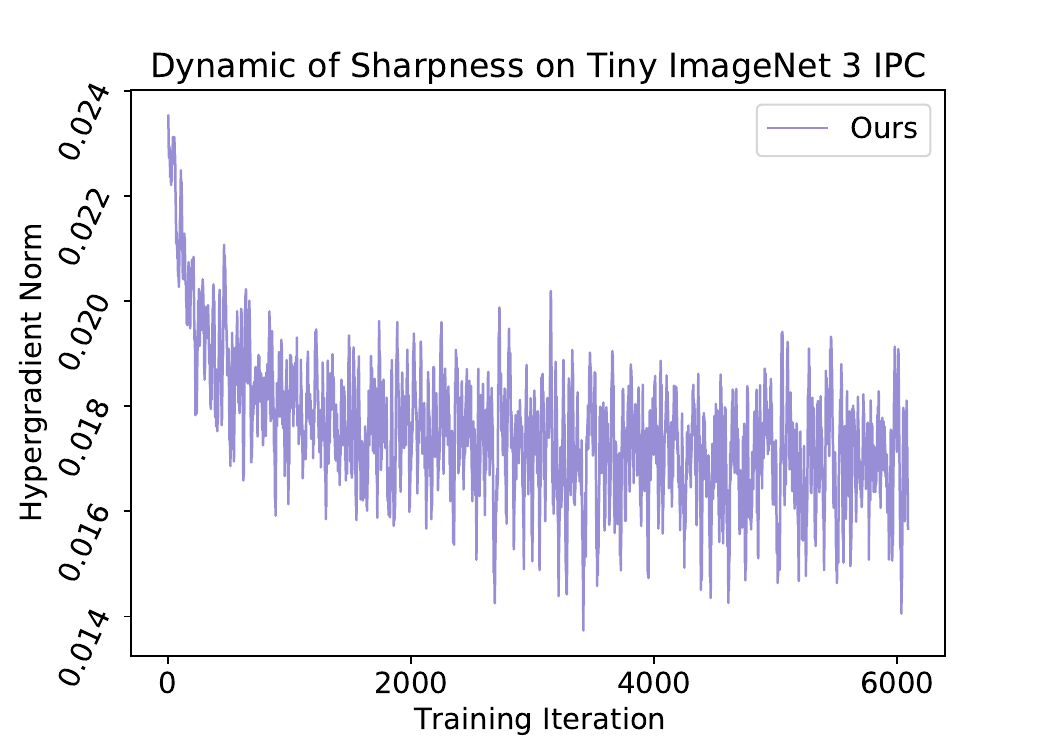}
\caption{Sharpness analysis by visualisation. Hypergradient Norm comparison between MTT and \ouracronym{}. Top: the hypergradient norm on Cifar100 with 10 IPC; Middle: the hypergradient norm on Tiny ImageNet with 3 IPC. Bottom: Sharpness dynamic on Tiny ImageNet with 3 IPC.}
\label{fig:Hypergradient_Norm}
\end{figure}

\textbf{Truncated Step Study.} We study the effects of the number of inner-loop steps remaining for the hypergradient estimation on the model performance with the settings that require the long inner-loops for dataset learning. Table~\ref{tab:setting_steps_comparison} details the settings, including the dataset, the number of images per category (IPC), and the inner-loop steps $N$. To analyze such effects, we retained the last $\frac{1}{k}$ steps, where $k =2,3,4,5,6$, of the total inner-loop steps. For simplicity, the inner-loop steps remained for the first round of hypergradient computation and trajectory reusing in the second round is kept the same which is applied across all experiments. The operation $int(\frac{N}{k})$ is used to determine the remaining inner-loop steps. We examined how accuracy changes with the remaining inner-loop steps by executing SATM for 10000 training iterations. A clear trend emerged: performance improves as the number of truncated iterations decreases and converges once the differentiation steps reach a certain threshold.
\begin{table}[t]
\centering
\small
\caption{Accuracy (\%) change along with the truncated inner-loop step change. We conduct SATM on CIFAR-10 to learn 1 image per category while running 50 iterations for the inner-loop and 80 iterations for the CIFAR-100 with 50IPC.}
\begin{tabular}{l|ccccc}
\toprule
\cut{
           & $\frac{1}{6}$ & $\frac{1}{5}$ & $\frac{1}{4}$ & $\frac{1}{3}$ & $\frac{1}{2}$ \\ \midrule
CIFAR-10   & $45.2_{\pm0.8}$ & $48.8_{\pm0.7} $& $47.5_{\pm0.3}$ & $49.0_{\pm0.3}$ & $49.2_{\pm0.2}$ \\ 
CIFAR-100  & $23.4_{\pm0.7}$ & $33.4_{\pm0.6} $& $48.7_{\pm0.6}$ & $50.9_{\pm0.5}$ & $50.5_{\pm0.3}$ \\ 
}
Configuration   & $\frac{1}{6}$ & $\frac{1}{5}$ & $\frac{1}{4}$ & $\frac{1}{3}$ & $\frac{1}{2}$ \\ \midrule
CIFAR-10   & 45.2 & 48.8 & 47.5 & 49.0 & 49.2 \\ 
CIFAR-100  & 23.4 & 33.4 & 48.7 & 50.9 & 50.5 \\ 
\bottomrule
\end{tabular}
\label{tab:setting_steps_comparison}
\end{table}

\cut{
\section{Conclusion}
In this work, we explore the generalization ability of condensed datasets produced by training trajectory-matching-based algorithms, optimizing both sharpness and trajectory distance. We propose \ourModel{} (\ouracronym{}) to reduce the computational cost of long-horizon inner-loops and mini-max optimization via hypergradient approximation strategies, which are theoretically grounded, practically effective. Our approach improves generalization across in- and out-of-domain tasks, including cross-architecture and continual learning, and can be successfully deployed to the challenging ImageNet-1K task with clear improvement on both generalizations performance and computational overhead compared with other methods. Additionally, \ouracronym{} serves as a "plug-and-play" model for other methods especially the curriculum learning-based method resulting in further improvement. Future research could explore advanced gradient estimation techniques, such as implicit gradient, to enhance computational efficiency and reduce approximation error.
}
\cut{
\section*{Impact Statement}
This paper aims to contribute to the advancement of Machine Learning. While our work may have various societal implications, none require specific emphasis in this context.
}

\section{Conclusions, Limitations and Future Works \label{conclusion}}
In this work, we explore the generalization ability of condensed datasets produced by training trajectory-matching-based algorithms via jointly optimising the sharpness and the distance between real and synthetic trajectories. We propose \ourModel{} (\ouracronym{}) to reduce the computational cost caused by the long horizon inner-loop and the mini-max optimization for the sharpness minimization through the proposed hypergradient approximation strategies. Those strategies have clear theoretical motivation, limited error in practice, and a framework flexible enough to adapt to other sharpness-aware based algorithms. The improvement of the generalization is observed in a variety of in- and out-of-domain tasks such as cross-architecture and cross-task (continual learning) with a comprehensive analysis of the algorithm's sharpness properties on the training dynamics. 

Despite the superior performance of \ouracronym{}, we observed that the proposed algorithm serves as a "plug-and-play" model for other dataset condensation methods and more broadly, for various bilevel optimization applications, such as loss function learning and optimizer learning. However, these possibilities are not explored in this work and we leave them to the future work. Moreover, beyond focusing on reusing the trajectory to enhance training efficiency in reaching flat regions, future research could be in advanced gradient estimation directions, such as implicit gradients, showing promise for managing long-horizon inner-loops and avoiding second-order unrolling. This could eliminate the entire second trajectory resulting in higher computational efficiency and less approximation error.  

\newpage
\bibliography{reference}
\bibliographystyle{icml2025}

\newpage
\appendix
\onecolumn
\section{Appendix}
\subsection{Proof for Theorem~\ref{theorem:traj_difference} \label{proof:traj_difference}}

\trajDifference*
\begin{proof}
Let:
\begin{align*}
\hat{\theta}_{\tau}&= \theta_{0} - \alpha \sum^{\tau}_{i} \nabla \mathcal{L}(\phi+\epsilon, \hat{\theta}_i) \\
\theta_{\tau} & = \theta_{0} - \alpha \sum^{\tau}_{i} \nabla \mathcal{L}(\phi, \theta_i) 
\end{align*}
then after N step iterations, the difference between $\theta_N$ and $\hat{\theta}_N$ is 
\begin{align*} 
\left\|\Delta \theta_{\tau} \right\| = \left\|\hat{\theta}_{\tau} - \theta_{\tau}\right\|  
& = \left\|-\alpha \sum^{\tau}_i (\nabla \mathcal{L}(\phi+ \epsilon, \hat{\theta}_i) - \nabla \mathcal{L}(\phi, \theta_i) ) \right\| \\ 
& = \alpha \left\| \sum^{\tau}_i (\nabla \mathcal{L}(\phi+ \epsilon, \hat{\theta}_i) - \nabla \mathcal{L}(\phi, \theta_i) ) \right\|
\end{align*}
We compute the gradient difference: 
\begin{align*}
    & ||\nabla \mathcal{L}(\phi + \epsilon, \hat{\theta}_i) - \nabla \mathcal{L}(\phi, \theta_i)|| \\
    & \approx ||\nabla \mathcal{L}(\phi, \hat{\theta}_i) + \nabla_{\phi} \nabla_{\theta} \mathcal{L}(\phi, \hat{\theta}_i) \cdot \epsilon - \nabla \mathcal{L}(\phi, \theta_i)|| \\
    & \leq ||\nabla \mathcal{L}(\phi, \hat{\theta}_i) - \nabla \mathcal{L}(\phi, \theta_i)|| + || \nabla_{\phi} \nabla_{\theta} \mathcal{L}(\phi, \hat{\theta}_i) \cdot \epsilon || \\
    & \leq 2\sigma + || \nabla_{\phi} \nabla_{\theta} \mathcal{L}(\phi, \hat{\theta}_i)|| || \epsilon ||
\end{align*}
With $ \nabla_{\phi} \nabla_{\theta} \mathcal{L}(\phi, \hat{\theta}_i)$ is $\beta$ smooth and $|| \epsilon|| = \rho$ : 
\begin{align*}
    ||\nabla \mathcal{L}(\phi + \epsilon, \hat{\theta}_i) - \nabla \mathcal{L}(\phi, \theta_i)||_2 \leq 2\sigma + \beta \rho
\end{align*}
Then:
\begin{align*}
\left\| \Delta \theta_{\tau} \right\| \leq \alpha \tau (2\sigma + \beta \rho) 
\end{align*}
\end{proof}

\subsection{Proof of Proposition~\ref{prop:gradient_similiarity}} \label{proof:gradient_similiarity}
\gradientSimiliarity*
\begin{proof}
        Let 
        \begin{align*}
             A_{i+1} = \frac{\partial \theta_{i+1}}{\partial \theta_i},  B_{i+1} = \frac{\partial \theta_{i+1}}{\partial \phi}
        \end{align*}
        then
        \begin{align*}
        \frac{\partial F(\phi)}{\partial \phi} = \frac{\partial \mathcal{L}(\theta(\phi))}{\partial \phi} + \sum^{N}_{i = 0} B_{i}A_{i+1} \cdots A_{N} \frac{\partial \mathcal{L}(\theta(\phi))}{\partial \theta_N (\phi)}
        \end{align*}
	Let $e_{\iota} = \frac{\partial F(\phi)}{\partial \phi} - \frac{\partial F_{\iota}(\phi)}{\partial \phi}$, 
	\begin{align*}
	e_{\iota} =  \left( \sum_{i=0}^{\iota}  B_{i} A_{i+1} \cdots A_{\iota}\right) A_{\iota+1} \cdots A_{N}  \frac{\partial \mathcal{L}(\theta(\phi))}{\partial \theta_N (\phi)}
	\end{align*}
    Given $\mathcal{L}_{CE}$ is locally $J$-strongly convex with respect to $\theta$ in the neighborhood of $\{\theta_{\iota+1}, \dots, \theta_N\}$,
	\begin{align*}
	\norm{e_{\iota}} &\leq \Bigg\| \sum_{i=0}^{\iota}  B_{i} A_{i+1} \cdots A_{\iota}\Bigg\| \Bigg\| A_{\iota+1} \cdots A_{N} \frac{\partial \mathcal{L}(\theta(\phi))}{\partial \theta_N (\phi)}\Bigg\| \\
	&\leq (1 - \alpha J)^{N-\iota +1} \left\| \frac{\partial \mathcal{L}(\theta(\phi))}{\partial \theta_{N}(\phi)} \right\|   \Bigg\| \sum_{i=0}^{\iota}  B_{i} A_{i+1} \cdots A_{\iota}\Bigg\| 
	\end{align*}
 
    In the worst case, when $\mathcal{L}_{CE}$ is $K$-smooth but nonconvex, then if the smallest eigenvalue of $\frac{\partial^2 \mathcal{L}_{CE}(\theta, \phi)}{\partial \theta \,\, \partial \theta}$ is $-K$, then $\norm{A_i} = 1 + \alpha K \leq 2$ for $i = 0,\dots, \iota$. 
\end{proof}
\subsection{Pytorch Based Pseudocode for Truncated Unrolling Hypergradient \label{pytorch_psed}}
\begin{algorithm}[h]
\small
\caption{Trucated hypergradient computation}
\begin{algorithmic}

\STATE \texttt{stop gradient}: 
    \FOR{$i= 1, \ldots, \iota$}
        \STATE $\theta_{i} = \theta_{i-1} - \alpha * \text{ torch.grad}(\mathcal{L}_{CE}(\theta, \phi), \theta)$ 
    \ENDFOR
        
    \STATE \texttt{with gradient}: 
    \FOR{$i= 1, \ldots, N-\iota$}
        \STATE $\theta_{i} = \theta_{i-1} - \alpha * \text{torch.grad}(\mathcal{L}_{CE}(\theta, \phi), \theta, \text{ retain\_graph} = \text{ True}, \text{create\_graph} = \text{ True})$
        
    \ENDFOR
    \STATE \textbf{Return:} $\theta_N(\phi)$
  \end{algorithmic}
\end{algorithm}

\subsection{The Derivation of Learning Rate Learning with First Order Derivative \label{lr_fod}}
In this section, we provide the derivation of the hypergradient calculation for learning rate $\alpha$ and the visual comparison of the learning rate learning dynamics generated by the first-order and second-order methods. Given the outer-loop objective, $\mathcal{L}(\theta(\phi))$, and the inner-loop object $\mathcal{L}_{CE}(\theta_i, \phi)$ with $N$ iteration unrolling, the computation can be dedicated by:  
\begin{align*}
\frac{\partial \mathcal{L}(\theta_N(\phi))}{\partial \alpha} & =  \frac{\partial \mathcal{L}(\theta_N(\phi))}{\partial \theta_{N}} \cdot  \frac{\partial (\theta_N, \phi)}{\partial \alpha} \\
& =  \frac{\partial \mathcal{L}(\theta_N(\phi))}{\partial \theta_{N}} \cdot  \frac{\partial \Xi(\theta_{N-1}, \phi)}{\partial \alpha} \\
& =  \frac{\partial \mathcal{L}(\theta_N(\phi))}{\partial \theta_{N}} \cdot  \frac{\partial}{\partial \alpha}\Bigg( \theta_{N-1} - \alpha \frac{\partial \mathcal{L}_{CE}(\theta_{N-1}, \phi)}{\partial \theta_{N-1}} \Bigg) \\
& =  \frac{\partial \mathcal{L}(\theta_N(\phi))}{\partial \theta_{N}} \cdot \Bigg(  \frac{\partial \theta_{N-1}}{\partial \alpha}  - \frac{\partial \mathcal{L}_{CE}(\theta_{N-1}, \phi)}{\partial \theta_{N-1}} \Bigg) \\
& \text{we treat } \frac{\partial \mathcal{L}_{CE}(\theta_{N-1}, \phi)}{\partial \theta_{N-1}} \text{ as a constant w.r.t. } \alpha \\
& =  \frac{\partial \mathcal{L}(\theta_N(\phi))}{\partial \theta_{N}} \cdot \Bigg(  \frac{\partial }{\partial \alpha} \Xi(\theta_{N-2}, \phi) - \frac{\partial \mathcal{L}_{CE}(\theta_{N-1}, \phi)}{\partial \theta_{N-1}} \Bigg) \\
& =  \frac{\partial \mathcal{L}(\theta_N(\phi))}{\partial \theta_{N}} \cdot \Bigg( - \sum^{N-1}_{i = 0} \frac{\partial \mathcal{L}_{CE}(\theta_i, \phi)}{\partial \, \theta_{i}}\Bigg)  \\
\end{align*}

 We compare the learning rate learning dynamic produced by first-order (our method) and second-order derivatives, demonstrating limited differences between those two methods. The visualisation comparison of the learning rate learning dynamic produced by the first and second-order derivative is illustrated in Fig.~\ref{fig:lr_lr}. As can be noticed, two inner-loop trajectories in the sharpness aware setting are capable of this Eq.~\ref{eq:lr_learning}. We chose the first in our experiments due to the implementation simplicity without causing any significant performance differences.  
\begin{figure}[!h]
    \centering
    \includegraphics[width=0.5\linewidth]{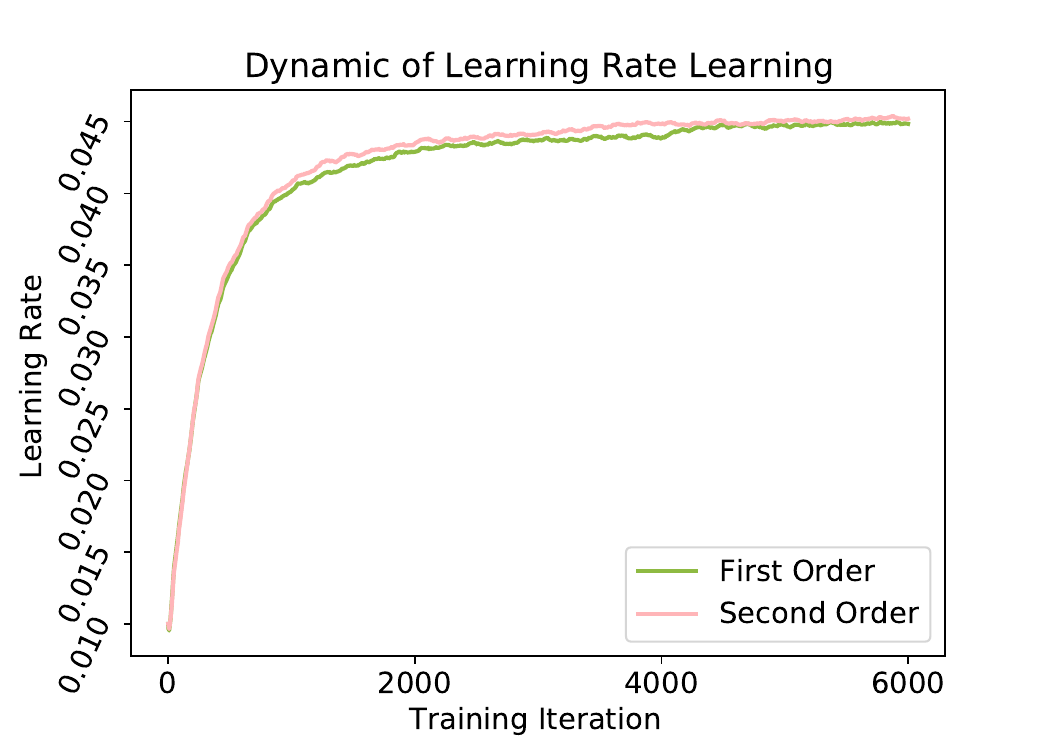}
    \caption{The comparison of the learning dynamic of learning rate learning with first and second order differentiation when condensing on the Cifar100-10IPC setting.}
    \label{fig:lr_lr}
\end{figure}

\subsection{Computational Resource\label{sec:computational_resource}}
We conduct all our experiments on two TESLA V100-32GB GPUs with Intel(R) Xeon(R) W-2245 CPU @ 3.90GHz and one A100-40GB GPU with Intel(R) Xeon(R) Gold 5118 CPU @ 2.30GHz which are on different servers. Thus, we cannot run the full batch of synthetic dataset learning as the same as other trajectory matching-based methods when the inner-loop trajectories contain many unrolling iterations. Those cases include Cifar100-10IPC, Cifar100-50IPC, and Tiny ImageNet 1IPC. In our case, stochastic gradient descent with mini-batch is utilised in the outer-loop instead. 
\cut{
\subsection{Computational Cost Comparsion \label{computational_cost_appendix}}
We computed and recorded the memory and time costs when running SATM and then compared them with MTT and TESLA following TESLA's experimental protocol. The results were primarily measured on a single NVIDIA A6000 GPU, except for MTT on ImageNet-1K~\citep{imagenet}, which required two A6000 GPUs.

In our experiments, at most only one-third of the inner-loop is retained to compute the hypergradients for sharpness approximation and synthetic dataset optimization.  Given the result in Table~\ref{tab:memory_comparison}, our strategy significantly reduces memory consumption compared to MTT, enabling the dataset to be trained on a single A6000 GPU. 

Regarding time cost illustrated in Table~\ref{tab:time_comparison}, SATM consistently outperforms the two inner-loop-based algorithms, TESLA. In the one-third inner-loop case, SATM even consumes less time than MTT which requires retaining a full single inner-loop.

\begin{table}[t]
\centering
\small
\begin{tabular}{c|cc| cc}
\toprule
Model         & \multicolumn{2}{c}{Memory}          & \multicolumn{2}{c}{Runtime} \\ 
              &   CIFAR-100      &   ImageNet-1K    &      CIFAR-100     &     ImageNet-1K  \\ \midrule
MTT           & 17.1$\pm$0.1 & 79.9$\pm$0.1 & 12.1$\pm$0.6 & 45.9$\pm$0.5 \\
TESLA         & 3.6$\pm$0.1 & 13.9$\pm$0.1 & 15.3$\pm$0.5 & 47.4$\pm$0.7 \\
SATM          & 5.7$\pm$0.1 & 26.6$\pm$0.1 & 12.0$\pm$0.5 & 45.4$\pm$0.4 \\
\bottomrule
\end{tabular}
\caption{GPU memory and runtime comparison among MTT, TESLA and SATM on CIFAR100 and ImageNet-1K with results measured with a batch size of 100 and 50 inner-loop steps. }
\label{tab:time_comparison}
\end{table}
}

\cut{
\begin{figure*}[!h]
    \centering
    \includegraphics[width=0.6\linewidth]{figures2/runtime and memory comparison.pdf}
\caption{GPU memory and runtime comparison among MTT, TESLA and SATM (N/3) on CIFAR100 and ImageNet-1K with results measured with a batch size of 100 and 50 inner-loop steps. }
\end{figure*}
}
\cut{
To further justify the memory efficiency of SATM, we challenge the ImageNet-1K setting following the training and evaluation protocol from TESLA. By truncating the inner-loop computational graph hold for hypergradient computation, SATM is executable on the heavy memory setting with results given in Table~\ref{tab:TESLA_satm_comparison}.
\begin{table}[h!]
\centering
\begin{tabular}{cc| cc}
\toprule
Dataset      & IPC & TESLA         & SATM             \\ \midrule
ImageNet-1K           & 1           & 7.7$\pm$0.2             & \textbf{8.9}$\pm$0.4      \\ 
                      & 2           & 10.5$\pm$0.3            & \textbf{11.4}$\pm$0.2     \\ 
                      & 10          & 17.8$\pm$1.3            & \textbf{19.2}$\pm$0.9     \\ 
                      & 50          & 27.9$\pm$1.1            & \textbf{29.2}$\pm$1.1     \\ 
\bottomrule
\end{tabular}
\caption{Comparison of TESLA and SATM across different IPCs on ImageNet-1K.}
\label{tab:TESLA_satm_comparison}
\end{table}
}

\subsection{Flat Inner-loop Study}
SATM is developed based on MTT without incorporating the components introduced in FTD~\citep{ftd}, particularly the expert trajectories generated by sharpness-aware optimizers such as GSAM. However, understanding whether SATM can be compatible with advanced expert trajectories is desirable to study. Therefore, we follow the expert trajectory generation protocol and execute SATM on the flat expert trajectories with the results in Table~\ref{tab:comparison_satm-fi}. It can be observed that the inclusion of a flat inner-loop leads to clear improvements in SATM-FI compared to both standard SATM and FTD. Furthermore, the authors of FTD noted the limited performance contribution of EMA, which was originally intended to guide the synthetic dataset toward convergence on a flat loss landscape. SATM addresses this limitation and effectively demonstrates the benefits of leveraging flatness for improved generalization.
\begin{table}[h!]
\centering
\small
\begin{tabular}{lc|cccc}
\toprule
                 & IPC   & MTT          & FTD              & SATM             & SATM-FI              \\ \midrule
                 & 1     & 46.2$\pm$0.8 & 46.8$\pm$0.3     & \textbf{49.0}$\pm$0.3     & 48.7$\pm$0.4 \\ 
CIFAR-10         & 10    & 65.4$\pm$0.7 & 66.6$\pm$0.3     & 67.1$\pm$0.4     & \textbf{67.9}$\pm$0.3 \\ 
                 & 50    & 71.6$\pm$0.2 & 73.8$\pm$0.2     & 73.9$\pm$0.2     & \textbf{74.2}$\pm$0.4 \\ \midrule
                 & 1     & 24.3$\pm$0.3 & 25.2$\pm$0.2     & 26.1$\pm$0.4     & \textbf{26.6}$\pm$0.5 \\ 
CIFAR-100        & 10    & 39.7$\pm$0.4 & 43.4$\pm$0.3     & 43.1$\pm$0.5     & \textbf{43.9}$\pm$0.7 \\ 
                 & 50    & 47.7$\pm$0.2 & 50.7$\pm$0.3     & 50.9$\pm$0.5     & \textbf{51.4}$\pm$0.5 \\ \midrule
Tiny-ImageNet    & 1     & 8.8$\pm$0.3  & 10.4$\pm$0.3     & 10.9$\pm$0.2     & \textbf{11.7}$\pm$0.4 \\ 
                 & 10    & 23.2$\pm$0.1 & 24.5$\pm$0.2     & 25.4$\pm$0.4     & \textbf{25.6}$\pm$0.6 \\ 
\bottomrule
\end{tabular}
\caption{Accuracy (\%) Comparison of MTT, FTD, SATM, and SATM-FI across different datasets and configurations.}
\label{tab:comparison_satm-fi}
\end{table}

\subsection{Compatibility with Advanced Sharpness-Aware optimizers.} We study the compatibility of the proposed hypergradient approximation method on other sharpness minimization-based methods including EMA, SAM~\citep{sam}, GSAM~\citep{gsam}, ASAM~\citep{asam} and Vasso~\citep{vasso} with our loss landscape smoothing mechanism removed. For a fair comparison, the hyperparameters of each method are properly tuned for the adaption to all the tasks including Cifar100 with 1 IPC and Tiny ImageNet with 3 IPC. We repeat each method 5 times and report the mean and variance in Table~\ref{tab:sam_family}. The results imply that all the sharpness methods consistently improve MTT~\citep{mtt}, which justifies the benefit of sharpness minimization. However, the competitors all fail to defeat our method due to the failure to accurately compute the sharpness proxy. Moreover, EMA, equivalent to FTD without Sharpness-aware minimizers to generate expert trajectories, gains minimal improvement. 
\begin{table*}[!h]
    \centering
    \small
    \label{tab:sam_family}
    \resizebox{0.85\textwidth}{!}{
    \begin{tabular}{c |c c c c c c | c }
    \toprule
    Dataset (IPC) & MTT & EMA &  SAM  & GSAM      & ASAM    &  Vasso  &  \ouracronym{}  \\
    \midrule
    Cifar100 (1)            & $24.3{\pm0.4}$   & $24.7{\pm0.2}$ & $25.7{\pm0.3}$ & $25.9{\pm0.3}$ & $25.7\pm{0.3}$ & $25.9{\pm0.2}$ & $\textbf{26.1}{\pm0.3}$ \\
    Tiny ImageNet (3)       & $10.5{\pm0.3}$   & $10.9{\pm0.3}$  & $12.3{\pm0.2}$  & $13.1{\pm0.2}$  & $12.8\pm{0.4}$  & $12.2{\pm0.2}$ &  $\textbf{13.6}{\pm0.2}$ \\
    \bottomrule
    \end{tabular}
    }
    \caption{Test Accuracy (\%) Comparison with the advanced sharpness aware minimization methods including EMA, SAM, GSAM, ASAM and Vasso with the same expert trajectories as MTT. }
\end{table*}
\cut{
\subsection{More related work and comparison with Recent Method}
A recent method, RDED~\citep{rded}, introduces new perspectives to the dataset distillation field by constructing synthetic images from original image crops and labelling them with a pre-trained model. In comparison, our work falls within the training trajectory matching area and focuses on efficient bilevel optimization with a long inner-loop with the goal of enhancing the generalization ability of synthetic data by developing an efficient, sharpness-aware optimizer for bilevel optimization. 
DATM~\citep{datm} utilizes the difficulty of training trajectories to implement a curriculum learning-based dataset condensation protocol. While this approach is relevant, it is somewhat distinct from research focused on optimization efficiency and generalization, such as TESLA, FTD, and SATM, which prioritize optimization efficiency through gradient approximation. Additionally, from an implementation perspective, DATM feeds expert trajectories in an easy-to-hard sequence directly into FTD. In contrast, our work focuses on the flatness of the loss landscape of the learning dataset from a bilevel optimization perspective, rather than emphasizing pure performance comparisons. Nevertheless, we believe our method is compatible with DATM. To demonstrate this, we conducted experiments combining DATM's easy-to-hard training protocol with SATM, yielding the following results in Table~\ref{tab:comparison_ipc_datm_satm_appendix}.
\begin{table}[h!]
\centering
\small
\begin{tabular}{lc|cccc}
\toprule
                & IPC & MTT & FTD & DATM & SATM-DA \\ \midrule
                & 1   & $46.2\pm0.8$ & $46.8\pm0.3$ & $46.9\pm0.5$ & $\mathbf{48.6}\pm0.4$ \\ 
CIFAR-10        & 10  & $65.4\pm0.7$ & $66.6\pm0.3$ & $66.8\pm0.2$ & $\mathbf{68.1}\pm0.3$ \\ 
                & 50  & $71.6\pm0.2$ & $73.8\pm0.2$ & $76.1\pm0.3$ & $\mathbf{76.4}\pm0.6$ \\ \midrule
                & 1   & $24.3\pm0.3$ & $25.2\pm0.2$ & $27.9\pm0.2$ & $\mathbf{28.2}\pm0.8$ \\ 
CIFAR-100       & 10  & $39.7\pm0.4$ & $43.4\pm0.3$ & $47.2\pm0.4$ & $\mathbf{48.3}\pm0.4$ \\ 
                & 50  & $47.7\pm0.2$ & $50.7\pm0.3$ & $55.0\pm0.2$ & $\mathbf{55.7}\pm0.3$ \\ \midrule
Tiny-ImageNet   & 1   & $8.8\pm0.3$  & $10.4\pm0.3$ & $\mathbf{17.1}\pm0.3$ & $16.4\pm0.4$ \\ 
                & 10  & $23.2\pm0.1$ & $24.5\pm0.2$ & $31.1\pm0.3$ & $\mathbf{32.3}\pm0.6$ \\ 
\bottomrule
\end{tabular}
\caption{Accuracy (\%) Comparison of MTT, FTD, DATM, and SATM-DA across different IPCs, datasets and configurations.}
\label{tab:comparison_ipc_datm_satm_appendix}
\end{table}
}

\subsection{Experiment Setting Details \label{experiment_setting_details}}
We conduct experiments on four main image datasets, Cifar10~\citep{cifar10}, Cifar100~\citep{cifar10},  TinyImageNet~\citep{tinyimg} and ImageNet~\citep{imagenet}. Cifar10 categorises 50,000 images with the size $32\times32$ into 10 classes while Cifar100 further categorises each of those 10 classes into 10 fine-grained subcategories. TinyImageNet comprises 100,000 images distributed across 200 categories, each category consisting of 500 images resized to dimensions of $64\times64$. We further evaluate \ouracronym{} on the subset of ImageNet, namely ImageNette, Image Woof, ImageFruit and ImageMeow with each set containing 10 different categories of $128\times128$ images and the whole ImageNet following the protocol from TESLA~\citep{TESLA}. 

We evaluate our methods on four main image datasets, Cifar10~\citep{cifar10}, Cifar100~\citep{cifar10}, TinyImageNet~\citep{tinyimg} and ImageNet-1K~\citep{imagenet}. The expert trajectories for Cifar10 and Cifar100 are trained with 3-layer ConvNet and collected after each epoch with the initialisation, and those for TinyImageNet and ImageNet are trained with 4-layer and 5-layer ConvNet~\cite{gidaris2018dynamic} respectively. In the in-domain setting, the synthetic datasets are learned and evaluated on the same architectures while in the out-of-domain settings, the learned synthetic datasets are deployed to train different architectures, such as AlexNet~\citep{Alexnet}, VGG11~\citep{vgg} and ResNet18~\citep{resnet}, which is novel to the synthetic datasets. The trained neural networks are evaluated on the real test sets for generalization ability comparison of the synthetic datasets. 

\subsection{Hyperparameters and Experiment Details~\label{sec:hyperparameters}}

The hyperparameters used for condensing datasets in all the settings are given in Tab~\ref{tab:hyperparameters} with ConvNet~\citep{gidaris2018dynamic} applied to construct the training trajectories. 

\begin{table}[h]
\centering
\small
\resizebox{1.0\linewidth}{!}{
\begin{tabular}{ccc|ccccccc}
\toprule
Dataset & Model & IPC & \makecell{Synthetic \\ Steps \\ ($N$)} & \makecell{Expert \\ Epochs \\($M$)}  & \makecell{Max Start \\ Epoch \\ ($T$)}  & \makecell{Synthetic \\ Batch Size \\ } & ZCA & \makecell{Learning \\ Rate \\(Images)} &  \makecell{Learning \\ Rate \\(Step size)}\\ \midrule
\multirow{3}{*}{CIFAR-10}  & \multirow{4}{*}{ConvNetD3}    &  1  & 50      & 2  & 2   & -  & Y & 1000 &1$\times10^{-6}$\\
                               &                           &  3  & 50      & 2  & 2   & -  & Y & 100  &1$\times10^{-5}$\\
                               &                           & 10  & 30      & 2  & 20  & -  & Y & 50   &1$\times10^{-5}$ \\
                               &                           & 50  & 30      & 2  & 40  & -  & Y & 100  &1$\times10^{-5}$ \\ \midrule
                                
\multirow{3}{*}{CIFAR-100} & \multirow{3}{*}{ConvNetD3}     &  1  & 40     & 3  & 20  & -      & Y  & 500  & 1$\times10^{-5}$\\ 
                              &                             &  3  & 45     & 3  & 20  & -      & Y  & 1000 & 5$\times10^{-5}$\\  
                              &                             & 10  & 20     & 2  & 20  & 500    & Y  & 1000 & 1$\times10^{-5}$\\  
                              &                             & 50  & 80     & 2  & 40  & 500    & Y  & 1000 & 1$\times10^{-5}$  \\  \midrule

\multirow{3}{*}{Tiny ImageNet} & \multirow{3}{*}{ConvNetD4} &  1  & 30      &  2  &  10  &  200 & Y & 1000  & 1$\times10^{-4}$\\ 
                               &                            &  3  & 30      &  2  &  15  &  200 & Y & 1000  & 1$\times10^{-4}$\\  
                               &                            & 10  & 20      &  2  &  40  &  200 & Y & 10000 & 1$\times10^{-4}$ \\  
\bottomrule
\end{tabular}
}
\caption{Hyper-parameters used for our \ouracronym{}. A synthetic batch size of ``-'' represents that a full batch set is used in each outer-loop iteration. ConvNetD3 and ConvNet4D denote the 3-layer and 4-layer ConvNet~\citep{gidaris2018dynamic} respectively. In all the settings, ZCA whitening~\citep{nguyen2021dataset, nguyen2022dataset} is applied. }
\label{tab:hyperparameters}
\end{table}

\subsection{Illustration for the Synthetic Images}
We visualise the learned synthetic datasets on Cifar10, Cifar100 and Tiny ImageNet in this section.
\begin{figure*}[!h]
    \centering
    \includegraphics[width=0.75\linewidth]{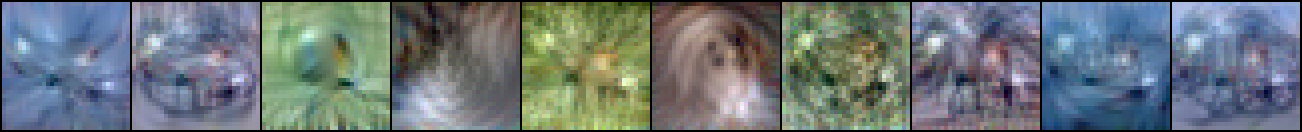}
\caption{Cifar10 with 1IPC}
\end{figure*}

\begin{figure*}[h!]
    \centering
    \includegraphics[width=0.7\linewidth]{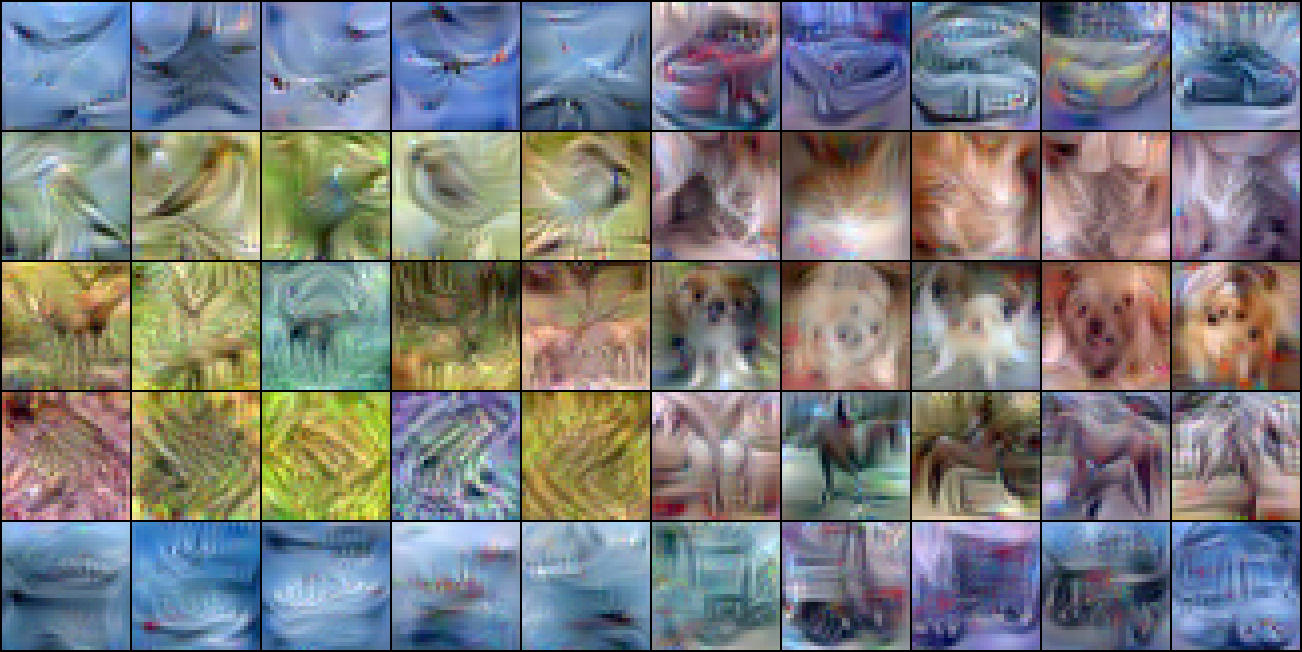}
\caption{Cifar10 with 3IPC}
\end{figure*}
\begin{figure*}[h!]
    \centering
    \includegraphics[width=0.7\linewidth]{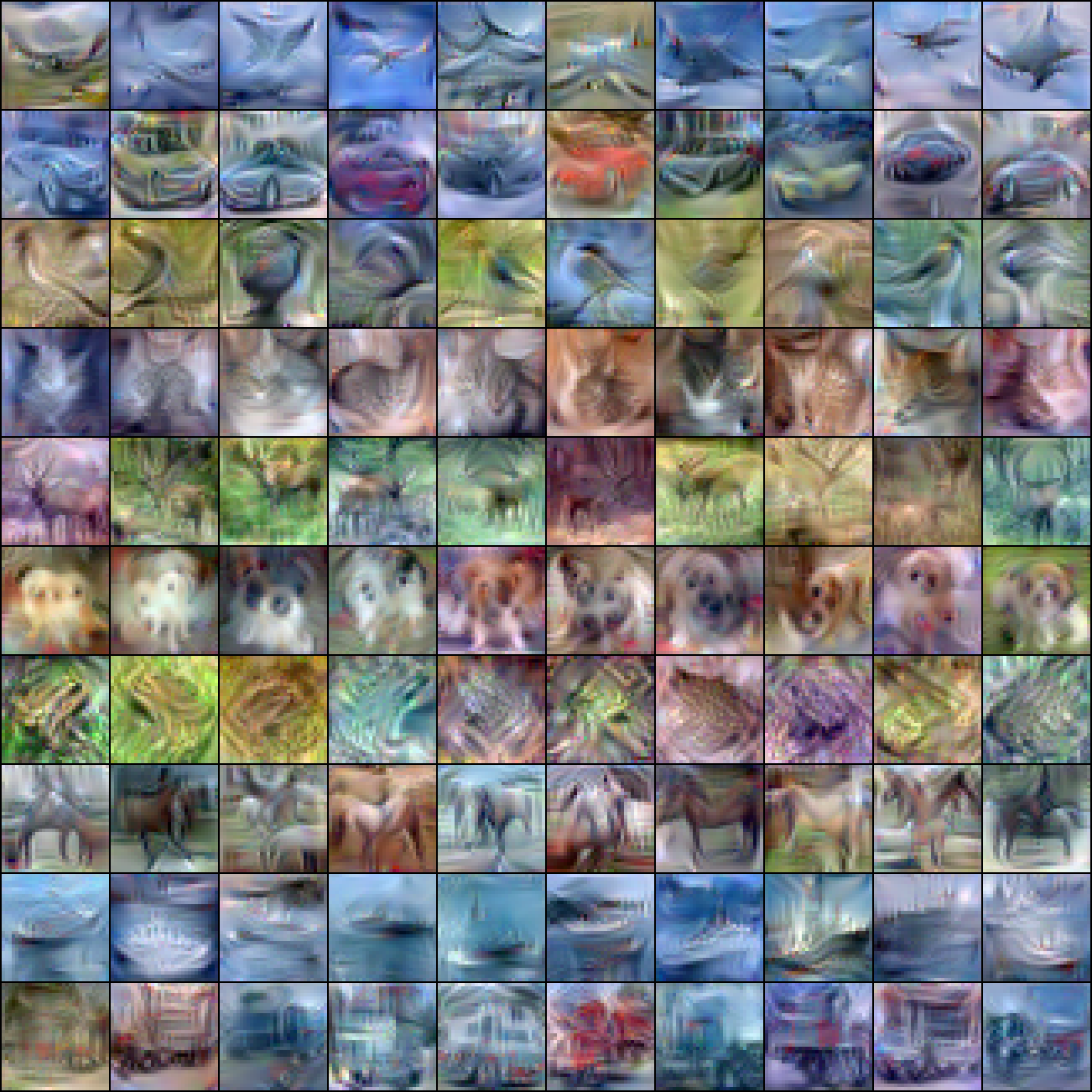}
\caption{Cifar10 with 10IPC}
\end{figure*}
\cut{
\begin{figure*}[h!]
    \centering
    \includegraphics[width=0.6\linewidth]{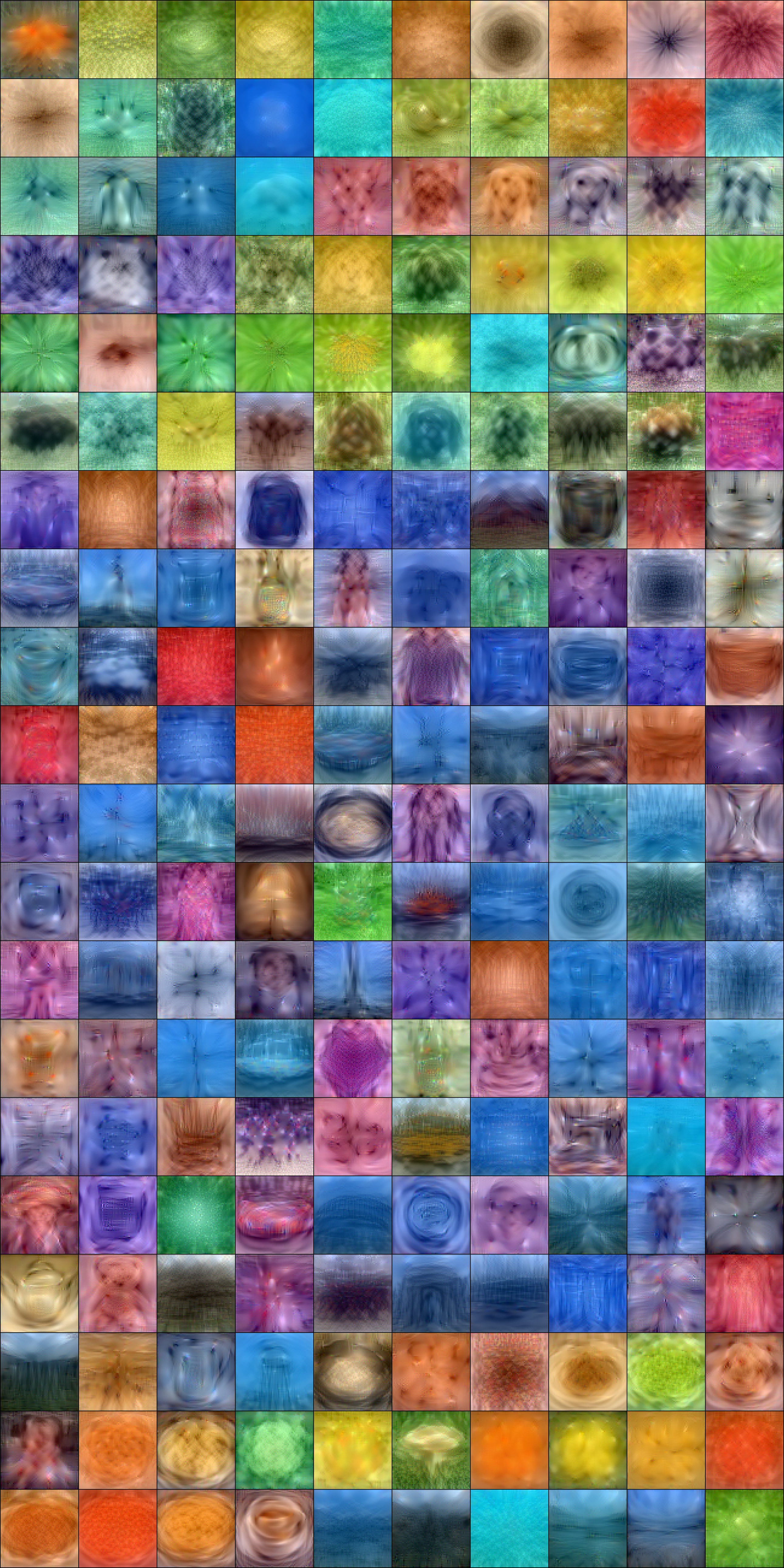}
\caption{Cifar100 with 1IPC}
\end{figure*}
}
\begin{figure*}[h!]
    \centering
    \includegraphics[width=0.6\linewidth]{figures/Tiny_1IPC.png}
\caption{Cifar100 with 1IPC}
\end{figure*}
\end{document}